%%%%%%%%%%%%%%%%%%%%%%%%%%%%%%%%%%%%%%%%%%%%%%%%%%%%%%%%%%%%%%%%%%%%%%%%%%%%%%%%
%2345678901234567890123456789012345678901234567890123456789012345678901234567890
%        1         2         3         4         5         6         7         8

\documentclass[letterpaper, 10 pt, conference]{ieeeconf}  % Comment this line out
                                                          % if you need a4paper
%\documentclass[a4paper, 10pt, conference]{ieeeconf}      % Use this line for a4
                                                          % paper

\IEEEoverridecommandlockouts                              % This command is only
                                                          % needed if you want to
                                                          % use the \thanks command
\overrideIEEEmargins
% See the \addtolength command later in the file to balance the column lengths
% on the last page of the document

%\usepackage{generic}
\usepackage{cite}
\usepackage{amsmath}
%\makeatletter
%\def\th@plain{%
%  \thm@notefont{}% same as heading font
%  \itshape % body font
%}
%\def\th@definition{%
%  \thm@notefont{}% same as heading font
%  \normalfont % body font
%}
%\makeatother
\usepackage{amssymb,amsfonts,bm}
\usepackage{algorithmic}
\usepackage{graphicx}
\usepackage{textcomp}
\usepackage{mathtools}
\usepackage{booktabs}
\usepackage{multicol}
\usepackage{tikz}

\usepackage{textcomp}
\usepackage{hyperref}
\usepackage{eso-pic}

\usepackage{xcolor}

\newtheorem{theorem}{\bf{Theorem}}
\newtheorem{lemma}{\bf{Lemma}}
\newtheorem{corollary}{\bf{Corollary}}
\newtheorem{assumption}{\bf{Assumption}}
\newtheorem{proposition}{\bf{Proposition}}
\newtheorem{remark}{\bf{Remark}}

\renewenvironment{proof}{{\bf{Proof:}}}{$\hfill\blacksquare$}

% The following packages can be found on http:\\www.ctan.org
%\usepackage{graphics} % for pdf, bitmapped graphics files
%\usepackage{epsfig} % for postscript graphics files
%\usepackage{mathptmx} % assumes new font selection scheme installed
%\usepackage{times} % assumes new font selection scheme installed
%\usepackage{amsmath} % assumes amsmath package installed
%\usepackage{amssymb}  % assumes amsmath package installed

\title{\LARGE \bf
Physically Consistent Learning of Conservative Lagrangian Systems with Gaussian Processes
}

%\author{ \parbox{3 in}{\centering Huibert Kwakernaak*
%         \thanks{*Use the $\backslash$thanks command to put information here}\\
%         Faculty of Electrical Engineering, Mathematics and Computer Science\\
%         University of Twente\\
%         7500 AE Enschede, The Netherlands\\
%         {\tt\small h.kwakernaak@autsubmit.com}}
%         \hspace*{ 0.5 in}
%         \parbox{3 in}{ \centering Pradeep Misra**
%         \thanks{**The footnote marks may be inserted manually}\\
%        Department of Electrical Engineering \\
%         Wright State University\\
%         Dayton, OH 45435, USA\\
%         {\tt\small pmisra@cs.wright.edu}}
%}

\author{Giulio Evangelisti and Sandra Hirche% <-this % stops a space
%\thanks{This work was supported by which organization? ERC?}% <-this % stops a space
\thanks{Both authors are members of the Chair of Information-oriented Control, Department of Electrical and Computer Engineering, Technical University of Munich, D-80333 Munich, Germany
        {\tt\small [giulio.evangelisti, hirche]@tum.de}}%
}

\AddToShipoutPicture{\begin{tikzpicture}[remember picture,overlay]
		\node[anchor=north,yshift=-10pt,align=center] at (current page.north) {\footnotesize This is the accepted version of a paper which has been published by IEEE in 2022 IEEE 61st Conference on Decision and Control (CDC).\\ \footnotesize The final published paper can be found at \href{https://doi.org/10.1109/CDC51059.2022.9993123}{DOI: 10.1109/CDC51059.2022.9993123}};%\\ \footnotesize Copyright may be transferred without notice, after which this version may no longer be accessible.};
\end{tikzpicture}}

\begin{document}

\newcommand\copyrighttext{%
	\footnotesize \textcopyright 2023 IEEE. Personal use of this material is permitted. Permission
	from IEEE must be obtained for all other uses, in any current or future
	media, including reprinting/republishing this material for advertising or
	promotional purposes, creating new collective works, for resale or
	redistribution to servers or lists, or reuse of any copyrighted
	component of this work in other works.}
\newcommand\copyrightnotice{%
	\begin{tikzpicture}[remember picture,overlay]
		\node[anchor=south,yshift=5pt] at (current page.south) {\fbox{\parbox{\dimexpr\textwidth-\fboxsep-\fboxrule\relax}{\copyrighttext}}};
	\end{tikzpicture}%
}

\maketitle
\copyrightnotice
\thispagestyle{empty}
\pagestyle{empty}

%%%%%%%%%%%%%%%%%%%%%%%%%%%%%%%%%%%%%%%%%%%%%%%%%%%%%%%%%%%%%%%%%%%%%%%%%%%%%%%%
\begin{abstract}
%The goal of this paper is the unification of data-driven techniques from machine learning with physical laws and mathematical models. 
This paper proposes a physically consistent Gaussian Process (GP) enabling the data-driven modelling of uncertain Lagrangian systems. The function space is tailored according to the energy components of the Lagrangian and the differential equation structure, analytically guaranteeing properties such as energy conservation and quadratic form. The novel formulation of Cholesky decomposed matrix kernels allow the probabilistic preservation of positive definiteness. Only differential input-to-output measurements of the function map are required while Gaussian noise is permitted in torques, velocities, and accelerations. We demonstrate the effectiveness of the approach in numerical simulation. %We validate the accuracy and physical consistency of the model, along with its effective closed-loop applicability, in numerical simulation.
% based on a unification of GPs and Kalman filter techniques. 
% It exploits the linearity of differentiation and addition and utilises a Cholesky decomposition of the unknown function space metrics.
%Thus, we exploit the flexibility of learning while providing structural model properties that are crucial for safe control design. 
%The combination of mathematical models derived from physics and  in a physically consistent manner is a challenging task since the former is parametric and the latter non-parametric.

% Proposing a novel physically-consistent Gaussian process model, we are able to learn the unknown Lagrangian of rigid-body systems using only differential output measurements. This modelling technique allows the preservation of physical and mathematical notions such as positive definiteness and energy conservation, either probabilistically or in certain cases even analytically. We validate the accuracy of the model, its physical consistency along with its effective closed-loop applicability in numerical simulation.
\end{abstract}

%%%%%%%%%%%%%%%%%%%%%%%%%%%%%%%%%%%%%%%%%%%%%%%%%%%%%%%%%%%%%%%%%%%%%%%%%%%%%%%%
\section{INTRODUCTION}

Euler-Lagrange (EL) systems represent a wide variety of physical systems and are the foundation of a large class of theoretical concepts in control, such as stability, passivity, and in feedback law design. The application of Gaussian Processes (GPs) in learning-based control holds great promise for the control of uncertain systems, improving important aspects like performance \cite{perf_guaraentees_gp}, safety \cite{berkenkamp}, and data-efficiency~\cite{data_eff_gps}. Yet, in general GP regression does not account for physically consistency, hampering the provision of safety guarantees when used in model-based control approaches.

We address this issue by formulating GPs consistent with the relevant physical laws and mathematical models, thus implicitly performing a trade-off between the flexibility of learning and physics-imposed symmetries. Traditionally, parametric techniques \cite{sys_id_ljung} have been applied to the identification of EL systems, as reviewed in \cite{param_id_leboutet}. However, mostly assuming linearity in the parameters, these methods eventually become infeasible with increasingly nonlinear, and coupled, dynamics, as for example in soft robots \cite{soft_robotics_com}, aircrafts \cite{day_couplings_air}, and marine vehicles~\cite{hydro}. %Increasing system complexity and decreasing knowledge further limit the applicability of conventional approaches .

Based on Bayesian principles, GPs allow the inclusion of analytical prior knowledge, thus forming a bridge to parametric model-based approaches. The authors of \cite{symp_GPR_ham_sys} propose the regression of Hamiltonian systems with GPs, preserving the symplectic structure, but without any guarantees and requiring the availability of noiseless impulse measurements. A Lagrangian kernel is derived in \cite{cheng_lagrangian_rkhs} within the framework of Reproducing Kernel Hilbert Spaces (RKHS). While analogies to our stochastic GP-based approach naturally exist, the work lacks a further specification of the associated function space, prohibiting a comprehensive analysis of control relevant properties closely intertwined with physical consistency. In \cite{thomas_tracking_control_of_el_systems}, the EL residual dynamics are compensated by employing independent GPs per degree of freedom, neglecting correlations and structure. This leads to an implicit differential equation system, the necessity of noiseless acceleration measurements and the curse of dimensionality. Guaranteeing physical consistency of data-driven methods still represents a major open problem.  % and the requirement of noiseless accelerations.% measurements. %\cite{jonas_fb_lin_ol} uses composite GPs, however arguably unrealistic assumption that system is already in controllable canonical form and no further structural guarantuees or investigations (point distribution, positiveness).

The main contribution of this work is the derivation of a physically consistent Lagrangian-Gaussian Process (L-GP) constructed by linear operators, embedding the differential equation structure into the model. Combined with quadratic kernels, we extend the GP concept from scalar function to  symmetric matrix spaces. Based on a Cholesky decomposition of the underlying matrix kernel, we are able to provide a probabilistic guarantee for the positive definiteness of the estimates. Further physical consistencies of the L-GP analytically preserving equilibria, passivity, and conservatism are formally shown.

After defining the problem setting in Section~\ref{sec_prob_form}, we review the framework of GPs in Section~\ref{gp_frame} and describe our proposed unification with EL systems in Section~\ref{l_gp_sec}. Section~\ref{inv_properties} presents our main results. Numerical illustrations validate theory and effectiveness of the approach in Section~\ref{sim_sec}.%, followed by a conclusion in Section~\ref{conclus_sec}.  The remainder of this paper is structured as follows: a

%%%%%%%%%%%%%%%%%%%%%%%%%%%%%%%%%%%%%%%%%%%%%%%%%%%%%%%%%%%%%%%%%%%%%%%%%%%%%%%%
\section{PROBLEM FORMULATION}\label{sec_prob_form}

In this paper, we consider the class of conservative EL systems whose equations of motion are given by\footnote{\textbf{Notation:} Vectors $\bm{a}$ and matrices $\bm{A}$ are denoted with bold lower and upper case characters, respectively. The nabla operator $\nabla$ multiplied with a differentiable scalar field $f(\bm{x})$ gives its gradient w.r.t.\ $\bm{x} \in \mathcal{X}$. The partial gradient w.r.t.\ a subset of variables $\bm{y} \in \mathcal{Y}$, $\mathcal{Y} \subset \mathcal{X}$, is described by $\nabla_{\bm{y}}f$. $\bm{I}$ denotes the identity, $\bm{0}$ the zero and $\bm{1}$ the ones matrix. A collection of $D$ indexed vectors $\bm{x}_i\in\mathbb{R}^N$ is stacked row-wise in the matrix $\bm{X}=[\bm{x}^\top_i]_{i=1}^D\in\mathbb{R}^{D\times N}$ such that $x_{ij}$ refers to the $j$-th component of $\bm{x}_i$. For notational convenience, we omit the indexing in the matrix expressions writing $[x_{ij}]=\bm{X}=[\bm{x}^\top_i]=[\bm{y}_j]=\bm{Y}^\top=[y_{ij}]$. $\mathrm{E}[\cdot]$, $\mathrm{Var}[\cdot]$ and $\mathrm{Cov}[\cdot]$ denote the expectation, variance and covariance operators w.r.t.\ random variables. The Hadamard product and square are denoted by $\bm{x}\circ\bm{y}$ and $\bm{x}^{\circ2}=\bm{x}\circ\bm{x}$, respectively, the Kronecker product with $\otimes$, and the direct sum with $\oplus$. With $\lambda_k(\cdot)$ we denote the $k$-th largest eigenvalue corresponding to the components of the decreasingly ordered eigenvalue vector $\bm{\lambda}^{\downarrow}(\cdot)$.}%The operator $\mathrm{diag}(\bm{a})=(\bm{a}\bm{1}^\top)\circ\bm{I}$ constructs a diagonal matrix from the vector $\bm{a}$, and, analogously, from submatrices a block-diagonal matrix. Eigenvalues and singular values, normal distribution, and also partially drop the dependencies of functions.
\begin{equation} % Lagrangian equations of motion
\frac{d}{dt} \left(\nabla_{\dot{\bm{q}}} L\right) - \nabla_{\bm{q}} L = \bm{\tau} \label{lagrange_nabla_eqs_of_mot}
\end{equation}
based on the set of generalized coordinates $\bm{q}\in\mathbb{R}^N$ and the unknown Lagrangian $L$. The generalized forces $\bm{\tau}\in\mathbb{R}^N$ are the measurable, possibly underactuated, inputs to the system.% Our goal is to model the Lagrangian with a physically consistent Gaussian process (GP) using noisy data measurements.

% Assumption #1: Autonomous Lagrangian, no dissipation
\begin{assumption}\label{ass_autonomous}
The unknown Lagrangian is autonomous with $L\equiv L(\bm{q},\dot{\bm{q}}) = T(\bm{q},\dot{\bm{q}}) - V(\bm{q})$ and composed of the difference between unknown kinetic energy $T\colon\mathbb{R}^N\times\mathbb{R}^N\to\mathbb{R}$ and unknown potential energy $V\colon\mathbb{R}^N\to\mathbb{R}$.
\end{assumption}

% Assumption #2: Kinetic energy
\begin{assumption}\label{ass_kinetic}
The kinetic energy is of the form $T(\bm{q},\dot{\bm{q}}) = \frac{1}{2}\dot{\bm{q}}^T\bm{M}(\bm{q})\dot{\bm{q}}$ where $\bm{M}\colon\mathbb{R}^N\to\mathbb{R}^{N\times N}$ is the unknown, (symmetric) positive definite, inertia matrix. %generalized
\end{assumption}

% Assumption #3: Potential energy
\begin{assumption}\label{ass_potential}
The potential energy is decomposable into the sum $V(\bm{q}) = G(\bm{q}) + U(\bm{q})$ of gravitational and elastic potential energies $G$ and $U$, respectively, where the former has an equilibrium, $G(\bm{0})=0$, $\nabla_{\bm{q}} G(\bm{0})=\bm{0}$, and the latter is given by $U(\bm{q}) = \frac{1}{2}\bm{q}^T\bm{S}(\bm{q})\bm{q}$ with unknown, (symmetric) positive definite, stiffness matrix $\bm{S}\colon\mathbb{R}^N\to\mathbb{R}^{N\times N}$.
\end{assumption}

Assumptions~\ref{ass_autonomous}--\ref{ass_potential} essentially describe the considered system class and represent its physical properties. Assumption~\ref{ass_autonomous} restricts the systems described by \eqref{lagrange_nabla_eqs_of_mot} to be only implicitly dependent on time and free from disturbances caused by other external forces such as dissipation. Extending to handle dissipative and time-variant systems is possible and topic of current research. Assumption~\ref{ass_kinetic} is a valid approach when dealing, e.g., with classical mechanical rigid body systems, while Assumption~\ref{ass_potential} constrains chosen coordinates to have an equilibrium at the origin.% The latter condition is w.l.o.g. allowing the derivation of more rigorous theoretical results. Thus, it may also be trivially omitted in the model.

Exploiting Assumptions \ref{ass_autonomous}--\ref{ass_potential} and applying the chain rule to \eqref{lagrange_nabla_eqs_of_mot}, we obtain the well-known matrix-vector expression
\begin{equation} % M-C-g like Matrix-Vector-Equation
\bm{M}(\bm{q})\ddot{\bm{q}} + \bm{C}(\bm{q},\dot{\bm{q}})\dot{\bm{q}} + \bm{g}(\bm{q}) = \bm{\tau}\, , \label{M_C_g_system}
\end{equation}
introducing the (non-unique) generalized Coriolis matrix $\bm{C}(\bm{q},\dot{\bm{q}})\coloneqq\frac{1}{2}[\nabla_{\dot{\bm{q}}}\nabla_{\bm{q}}^\top T + \dot{\bm{M}}(\bm{q}) - \nabla_{\bm{q}}\nabla_{\dot{\bm{q}}}^\top T]$ as in \cite{murray_amitrm} and the vector of generalized potential forces $\bm{g}(\bm{q})\coloneqq\nabla_{\bm{q}}V$, having exploited $\nabla_{\dot{\bm{q}}}\nabla_{\dot{\bm{q}}}^\top T=\bm{M}(\bm{q})$, $\nabla_{\bm{q}}T=\frac{1}{2}(\nabla_{\bm{q}}\nabla_{\dot{\bm{q}}}^\top T)\dot{\bm{q}}$ as well as $\left(\nabla_{\dot{\bm{q}}}\nabla_{\bm{q}}^\top T\right)\dot{\bm{q}}=\dot{\bm{M}}(\bm{q})\dot{\bm{q}}$. Note that there are other ways to define the matrix $\bm{C}(\bm{q},\dot{\bm{q}})$. This particular choice, however, has the unique property that $\dot{\bm{M}}-2\bm{C}$ is skew-symmetric.

Our objective is to approximate the unknown Lagrangian function $L(\bm{q},\dot{\bm{q}})$ by a data-based estimate $\hat{L}(\bm{q},\dot{\bm{q}})$ which is physically consistent, i.e., maximally aligned with the existing physical knowledge, in our case Assumptions~\ref{ass_autonomous}--\ref{ass_potential}. We assume access to noisefree positional observations $\bm{q}_i$, $i = 1,\dots,D$ with $D\in\mathbb{N}$, and to potentially noisy velocity, acceleration and torque measurements
\begin{subequations} % Noise measurement model
\begin{align} 
\dot{\bm{x}}_i &= \begin{bmatrix} \dot{\bm{q}}_i \\ \ddot{\bm{q}}_i \end{bmatrix} + \begin{bmatrix} \bm{\omega}_i \\ \bm{\alpha}_i \end{bmatrix}, \!\quad \begin{bmatrix} \bm{\omega}_i \\ \bm{\alpha}_i \end{bmatrix} \sim \mathcal{N}\left(\bm{0}, \begin{bmatrix}  \bm{\Sigma}_{\bm{\omega}_i} \!&\! \bm{0} \\ \bm{0} \!&\! \bm{\Sigma}_{\bm{\alpha}_i} \end{bmatrix}\right) \label{noise_meas_model_dx}, \\
\bm{y}_i &= \bm{\tau}_i(\bm{q}_i,\dot{\bm{q}}_i,\ddot{\bm{q}}_i) + \bm{\varepsilon}_i \, , \quad \bm{\varepsilon}_i \sim \mathcal{N}(\bm{0},\bm{\Sigma}_{\bm{\varepsilon}_i}) \label{noise_meas_model_tau}\, .
\end{align}
\end{subequations}%, e.g., in \cite{opt_est_simon,odelson_kf_tuning,akesson_kf}
The noise processes $\{\bm{\omega}_i\}$, $\{\bm{\alpha}_i\}$ and $\{\bm{\varepsilon}_i\}$ are white, zero-mean, uncorrelated, and have known covariance matrices $\bm{\Sigma}_{\bm{\omega}_i}$, $\bm{\Sigma}_{\bm{\alpha}_i}$ and $\bm{\Sigma}_{\bm{\varepsilon}_i}$, respectively. %Using the Kronecker delta $\delta_{i-j}$, we follow that %The differential process noise $\bm{w}_i^\top\coloneqq[\bm{\omega}_i^\top\!,\bm{\alpha}_i^\top]$ has a block-diagonal covariance, whereas the measurement covariance has a general correlated structure: % $\bm{\Sigma}_{\bm{w}_i}=\mathrm{diag}(\bm{\Sigma}_{\bm{\omega}_i},\bm{\Sigma}_{\bm{\alpha}_i})$
%\begin{align}
%\begin{split}
%\mathrm{E}[\bm{\upsilon}_i\bm{\upsilon}_j^\top] &= \delta_{i-j}\bm{\Sigma}_{\bm{\upsilon}_i}\, , \hspace{0.915cm} \bm{\Sigma}_{\bm{\upsilon}_i} = \begin{bmatrix}  \bm{\Sigma}_{\bm{\omega}_i} & \bm{0} \\ \bm{0} & \bm{\Sigma}_{\bm{\alpha}_i} \end{bmatrix} \, , \\
%\mathrm{E}[\bm{\varepsilon}_i\bm{\varepsilon}_j^\top] &= \delta_{i-j}\bm{\Sigma}_{\bm{\varepsilon}_i}\, , \quad \mathrm{E}[\bm{\upsilon}_i\bm{\varepsilon}_j^\top] = \bm{0} \, ,
%\end{split}
%\end{align}
%where $\bm{\upsilon}_i^\top\coloneqq[\bm{\omega}_i^\top\!,\bm{\alpha}_i^\top]$ is the structured differential process noise with $\bm{\omega}_i,\bm{\alpha}_i\in\mathbb{R}^N$. %_{i=1}^D
After collecting all $D$ observations at positions $\bm{Q}=[\bm{q}_i^\top]$ with analogous matrices $\dot{\bm{X}}=[\dot{\bm{x}}_i^\top]$ and $\bm{Y}=[\bm{y}_i^\top]$, we obtain the training data set $$\mathcal{D}=\{\bm{Q},\dot{\bm{X}},\bm{Y}\}\, .$$ %derivative and torque measurement matrices%, where $\bm{X}=[\bm{Q},\bm{V}]\in\mathbb{R}^{D\times 2N}$ are the inputs to the unknown function of which we only have noisy derivative measurements $\bm{Y}$ available. The Gaussian random variables (GRVs) $\bm{\omega}_i$, $\bm{\omega}_j$ as well as $\bm{\alpha}_k$, $\bm{\alpha}_l$ and $\bm{v}_n$, $\bm{v}_m$ are stochastically independent for different $i$, $j$ as well as $k$, $l$ and $n$, $m$, respectively, while $\bm{\omega}_i$, $\bm{\alpha}_j$ and $\bm{v}_k$ are independent for any $i$, $j$ and $k$. 

\begin{remark}
Extending to noisy positional inputs is possible via certain GP-based methods as in, e.g., \cite{nips_gp_input_noise,johnson_input_noise_gp}, yet they are mostly based on first-order Taylor approximations. 
\end{remark}
 
%%%%%%%%%%%%%%%%%%%%%%%%%%%%%%%%%%%%%%%%%%%%%%%%%%%%%%%%%%%%%%%%%%%%%%%%%%%%%%%%
\section{GAUSSIAN PROCESS FRAMEWORK}\label{gp_frame}

This section gives a brief overview of the mathematical framework of GPs based on \cite{rasmussen_gp_mit_book}. For a more complete introduction, the reader is referred to the literature \cite{raissi_gp_diff_eqs,nips_deriv_obs_solak,md_gps_alvarez}.

% Inference with GPs
\subsection{Inference with GPs}

For $\bm{x},\bm{x}^\prime\in\bm{\mathcal{X}}$ in a continuous domain $\bm{\mathcal{X}}\subseteq\mathbb{R}^{2N}$, a GP with mean $m(\bm{x})$ and covariance $k(\bm{x},\bm{x}^\prime)$, denoted by 
\begin{equation} % GP definition
f(\bm{x}) \sim \mathcal{GP}\left(m(\bm{x}), k(\bm{x},\bm{x}^\prime) \right) \label{GP_definition}
\end{equation}
and with scalar $f(\bm{x})\in\mathbb{R}$, is a stochastic process extending the Gaussian probability distribution from random variables to functions. As such, it inherits the convenient mathematical properties of the Normal distribution. In particular, any finite marginalization $\bm{z}=[z_i]$ of function observations $z_i=f(\bm{x_i})+\zeta_i$, corrupted by the zero-mean white-noise process $\zeta_i\in\mathbb{R}$ and corresponding to the finite subset of function evaluations at $\{\bm{x}_i\}\subset\bm{\mathcal{X}}$, is jointly Gaussian distributed.

A GP is fully characterized by its mean and kernel function $m(\bm{x})=\mathrm{E}[f(\bm{x})]$ and $k(\bm{x},\bm{x}^\prime)=\mathrm{Cov}[f(\bm{x}), f(\bm{x}^\prime)]$, respectively. The former allows the analytical inclusion of prior knowledge about the unknown function, whereas the latter determines higher level functional properties such as smoothness. Estimation of the covariance describing hyperparameters is mostly done via optimization of the marginal likelihood to maximize the probability of observing the measured outputs.

For regression, GPs exploit the joint Gaussian distribution of observations $\bm{z}$ at measurement locations $\bm{X}=[\bm{x}_i^\top]$ and prediction $f(\bm{x})$ w.r.t.\ the test input $\bm{x}\in\bm{\mathcal{X}}$ given by
\begin{equation} % GP joint Gaussian distribution
\begin{bmatrix} f(\bm{x}) \\ \bm{z} \end{bmatrix} \sim \mathcal{N}\left(\begin{bmatrix} m(\bm{x}) \\ \bm{m} \end{bmatrix}, \begin{bmatrix} k(\bm{x},\bm{x}) & \bm{k}^\top(\bm{x}) \\ \bm{k}(\bm{x}) & \bm{K}+\bm{\Sigma} \end{bmatrix} \right) \label{joint_gauss_dist} \, ,
\end{equation}
where we have introduced the mean $\bm{m}=[m(\bm{x}_i)]$ and covariance $\bm{k}(\bm{x})=[k(\bm{x}_i,\bm{x})]$ vectors, the Gram matrix $\bm{K}=[k(\bm{x}_i,\bm{x}_j)]$ as well as the noise covariance matrix $\bm{\Sigma}=\mathrm{Var}[\bm{\zeta}]$ with $\bm{\zeta}=[\zeta_i]$. Here, we have dropped the dependencies of $\bm{m}(\bm{X})$, $\bm{k}(\bm{X},\bm{x})$ and $\bm{K}(\bm{X},\bm{X})$ on $\bm{X}$ for notational simplicity. Conditioning on the observations yields a conditional Gaussian distribution with posterior mean $\hat{f}(\bm{x})\equiv\mathrm{E}[f(\bm{x})|\bm{z}]$ given analytically by 
\begin{equation} % Posterior mean
\hat{f}(\bm{x}) = m(\bm{x}) + \bm{k}^\top(\bm{x}) \left( \bm{K}+\bm{\Sigma} \right)^{-1}\left( \bm{z} - \bm{m} \right). \label{post_gp_mean}
\end{equation}
%Note that the mean prediction of multiple values in parallel can be derived analogously by considering a vector of desired predictions in \eqref{joint_gauss_dist}, thus leading to the efficient extension of \eqref{post_gp_mean} from an inner product to a matrix-vector product.

% Linear Operators and GPs
\subsection{Linear Operators and GPs}

Derivatives and integrals of GPs remain GPs, since these operations are linear.  Applying a linear transformation operator $\bm{\mathcal{T}}_{\bm{x}}$ to the GP of \eqref{GP_definition} leads to a new, possibly multidimensional, GP in the form of 
\begin{equation} % Linear Operator GP
\bm{\mathcal{T}}_{\bm{x}}f(\bm{x}) \sim \bm{\mathcal{GP}}\left(\bm{m}_{\bm{\mathcal{T}}}(\bm{x}), \bm{K}_{\bm{\mathcal{T}}}(\bm{x},\bm{x}^\prime) \right) \label{GP_linear_op}
\end{equation}
with transformed mean $\bm{m}_{\bm{\mathcal{T}}}(\bm{x})=\bm{\mathcal{T}}_{\bm{x}}m(\bm{x})$ and kernel 
\begin{equation} % Linear operator applied to general GP
\bm{K}_{\bm{\mathcal{T}}}(\bm{x},\bm{x}^\prime) = \bm{\mathcal{T}}_{\bm{x}} k(\bm{x},\bm{x}^\prime) \bm{\mathcal{T}}^\top_{\bm{x}^\prime} \, . \label{lin_op_app_to_kernel}
\end{equation}
Here, $\bm{\mathcal{T}}^\top_{\bm{x}^\prime}$ is applied from the right \cite{symp_GPR_ham_sys} as is necessary when considering matrix operators.

%%%%%%%%%%%%%%%%%%%%%%%%%%%%%%%%%%%%%%%%%%%%%%%%%%%%%%%%%%%%%%%%%%%%%%%%%%%%%%%%
\section{REGRESSION OF LAGRANGIAN SYSTEMS}\label{l_gp_sec}

Our key idea is to consistently model the unknown Lagrangian using a GP. % as described in the following.
% Gaussian Process Model
%\subsection{Lagrangian GP Model}
We propose a structural approach for the application of GP regression \eqref{GP_definition}--\eqref{post_gp_mean} to EL systems \eqref{lagrange_nabla_eqs_of_mot}--\eqref{M_C_g_system}: deriving specific GPs and kernels, we ensure consistency with physical requirements, as in Assumptions \ref{ass_autonomous}--\ref{ass_potential}, and embed the structure of the differential equations using Nabla operators~$\nabla$, thereby inducing multidimensional L-GPs according to \eqref{GP_linear_op}--\eqref{lin_op_app_to_kernel} based on the linearity of differentiation.

% Energy-structured Lagrangian GP
\subsection{Energy Structuring}
Firstly, we model the Lagrangian as a composite GP $L(\bm{q},\dot{\bm{q}}) = T(\bm{q},\dot{\bm{q}}) - V(\bm{q})$ writing
\begin{equation} % Lagrange-GP model
L(\bm{q},\dot{\bm{q}}) \sim \mathcal{GP}\left(m_L(\bm{q},\dot{\bm{q}}),k_L(\bm{q},\dot{\bm{q}},\bm{q}^\prime,\dot{\bm{q}}^\prime)\right) \label{l_gp_model}
\end{equation}
with the subjacent kinetic and potential energy GPs
\begin{subequations} % Energy GPs
\begin{align} 
T(\bm{q},\dot{\bm{q}}) &\sim \mathcal{GP}\left(m_T(\bm{q},\dot{\bm{q}}),k_T(\bm{q},\dot{\bm{q}},\bm{q}^\prime,\dot{\bm{q}}^\prime)\right)\, , \label{kinetic_GP} \\
V(\bm{q}) &\sim \mathcal{GP}\left(m_V(\bm{q}),k_V(\bm{q},\bm{q}^\prime)\right) \, . \label{potential_GP}
\end{align}
\end{subequations}
Here, the means $m_L=m_T-m_V$ and covariances $k_L=k_T+k_V$ specify the L-GP \eqref{l_gp_model} in a structurally consistent manner. In particular, the potential energy GP \eqref{potential_GP} depends only on the position $\bm{q}$ by construction. It is split up further into the sum $V(\bm{q})=G(\bm{q})+U(\bm{q})$ of independent gravitational and elastic potential GPs\begin{subequations} % Potential GPs
\begin{align} 
G(\bm{q}) &\sim \mathcal{GP}\left(m_G(\bm{q}),k_G(\bm{q},\bm{q}^\prime)\right) \, , \label{grav_pot_GP} \\
U(\bm{q}) &\sim \mathcal{GP}\left(m_U(\bm{q}),k_U(\bm{q},\bm{q}^\prime)\right) \, . \label{elast_pot_GP}
\end{align}
\end{subequations}

% Cholesky matrix kernels
\subsection{Quadratic and Positive Definite Structures}\label{ssec_quad_pdf_strucs}

Consistency of kinetic and elastic GPs is ensured by according design of their mean and covariance functions. %We now equivalently use the differential-quadratic forms described by to denote the respective energies by $f_k\in\{U,T\}$ based on the $k$-th time derivative with $k\in\{0,1\}$. %, i.e., by $f_0 \equiv U$ with $\bm{H}_0 \equiv \bm{S}$ and $f_1 \equiv T$ with $\bm{H}_1 \equiv \bm{M}$. %The underlying generalized matrices are also accessible via the differential-operational index $k$ as elements of the set of matrices $\bm{H}_k\in\{\bm{S},\bm{M}\}$.
%\end{definition}
If available, we impose on the according priors the same quadratic structure based on, e.g., for the kinetic energy, the (symmetric) positive definite a-priori matrix $\bm{M}_0(\bm{q})\succ\bm{0}$, $\forall\bm{q}\in\mathbb{R}^N$, as described in the following. 

\begin{assumption}
The priors $m_{f_k}\in\{m_U,m_T\}$ are in the same form as the unknown energy functions
\begin{equation}
f_k(\bm{q}) \coloneqq \frac{1}{2} \overset{(k)}{\bm{q}}\vspace{-0.2cm}\hspace{-0.025cm}^{\top}\vspace{0.2cm} \bm{H}_k(\bm{q}) \overset{(k)}{\bm{q}} \label{quadratic_forms}
\end{equation}
for $k\in\{0,1\}$, where $m_{f_0}\equiv m_U$ with $\bm{H}_{00}\equiv\bm{S}_0$ and $m_{f_1}\equiv m_T$ with $\bm{H}_{10}\equiv\bm{M}_0$. In addition, the gravitational prior fulfills the equilibrium condition such that $m_G(\bm{0})=0$ and $\nabla_{\bm{q}} m_G(\bm{0})=\bm{0}$ hold.
\end{assumption}

Having specified the means in \eqref{kinetic_GP} and \eqref{elast_pot_GP}, we now move on to the kernels. Consider the quadratic functional
\begin{equation} % Quadratic functional
\kappa_k(\bm{q},\bm{q}^\prime) = \frac{1}{4} \Big(\overset{(k)}{\bm{q}} \circ \overset{(k)}{\bm{q}}\vspace{-0.2cm}^{\prime}\vspace{0.2cm} \Big)^\top \bm{\Theta}_k(\bm{q},\bm{q}^\prime) \Big(\overset{(k)}{\bm{q}} \circ \overset{(k)}{\bm{q}}\vspace{-0.2cm}^{\prime}\vspace{0.2cm}\Big) \label{quadratic_kernel_functional}
\end{equation} %leading to the $k$-th time derivative of the Hadamard factors $\bm{q}_i,\bm{q}_j$
with differential-operational index $k\in\{0,1\}$ and the (symmetric) positive definite Cholesky decomposed matrix kernel
\begin{equation} % Cholesky decomposed matrix kernel
\bm{\Theta}_k(\bm{q},\bm{q}^\prime) = \bm{R}_k^\top(\bm{q},\bm{q}^\prime)\bm{R}_k(\bm{q},\bm{q}^\prime) \label{theta_matrix} \, .
\end{equation}
The Cholesky factor $\bm{R}_k$ is an upper-right triangular matrix
\begin{equation} % Cholesky factor
\bm{R}_k(\bm{q},\bm{q}^\prime) = \begin{bmatrix} r_{k11}(\bm{q},\bm{q}^\prime) & \hdots & r_{k1N}(\bm{q},\bm{q}^\prime) \\  & \ddots & \vdots \\ \bm{0} &  & r_{kNN}(\bm{q},\bm{q}^\prime) \end{bmatrix} \label{cholesky_factor_R_mat}% \, , \label{cholesky_factor_R_mat}
\end{equation}
consisting for $0\leq n\leq m\leq N$ of kernels $r_{knm}(\bm{q},\bm{q}^\prime)$. We now use \eqref{quadratic_kernel_functional} to set the covariances in \eqref{kinetic_GP} and \eqref{elast_pot_GP} to:\begin{subequations}
\begin{align}
k_U &\equiv k_{f_0}\coloneqq\kappa_0 \label{pot_en_kernel}\, , \\
k_T &\equiv k_{f_1}\coloneqq\kappa_1 \label{kin_en_kernel} \, .
\end{align}
\end{subequations}
Thus, we implicitly model the stiffness and inertia matrices denoted in accordance with \eqref{quadratic_forms} by $\bm{H}_k\coloneqq[h_{knm}]$ as symmetric, matrix-valued, GPs with entries
\begin{equation} % inertia & stiffness matrix symmetric GP components
h_{knm}(\bm{q})\sim\mathcal{GP}\left(\eta_{knm}(\bm{q}),\theta_{knm}(\bm{q},\bm{q}^\prime)\right) \, , \nonumber%\label{mat_comp_GPs}
\end{equation}
where $h_{knm}=h_{kmn}$, and analagously constructing $\bm{H}_{k0}\coloneqq[\eta_{knm}]$. The kernels $\theta_{knm}=\bm{\rho}_{kn}^\top\bm{\rho}_{km}$ stem from the inner product of the non-zero columns from \eqref{cholesky_factor_R_mat}, i.e, $\bm{\rho}_{km}\coloneqq[r_{klm}]\in\mathbb{R}^{P}$ for $P=\min\{n,m\}$. Note that, due to the nonlinearity of multiplication, the $r_{klm}$ may not be associated any further with underlying GPs, or more specifically, Gaussian Random Variables (GRVs). However, the covariances $\theta_{knm}$ remain valid kernel functions according to \cite{rasmussen_gp_mit_book}. 

% Remark #4: Hybrid modelling
%\begin{remark}
%Given a known component $h_{knm}$, hybrid modelling can be achieved via the orthogonality constraint $\bm{\rho}_{kn}^\top\bm{\rho}_{km}=0$, leading to the equivalency $h_{knm}=\eta_{knm}$ and a hyperdimensional reduction.
%\end{remark}

% Differential equation embedding
\subsection{Differential Equation Embedding} Let us now use the chain rule to expand $\bm{\mathcal{L}}_{\bm{q}} \equiv \frac{d}{dt}\nabla_{\dot{\bm{q}}}-\nabla_{\bm{q}}$ from \eqref{lagrange_nabla_eqs_of_mot} by writing
\begin{equation} % Lagrangian operator
\bm{\mathcal{L}}_{\bm{q}} = \left(\nabla_{\dot{\bm{q}}}^\top\ddot{\bm{q}}+\nabla_{\bm{q}}^\top \dot{\bm{q}}\right)\nabla_{\dot{\bm{q}}} - \nabla_{\bm{q}} \coloneqq \bm{\mathcal{L}}_{\bm{q},\dot{\bm{q}}}(\dot{\bm{q}},\ddot{\bm{q}}) \, . \label{lagrange_op}
\end{equation}
Applying the Lagrangian-differential vector operator \eqref{lagrange_op} to the GP from \eqref{l_gp_model}, we obtain for the torques \eqref{lagrange_nabla_eqs_of_mot}--\eqref{M_C_g_system} a vector-valued GP $\bm{\tau}\equiv\bm{\tau}(\bm{q},\bm{q}^\prime;\dot{\bm{q}},\ddot{\bm{q}},\dot{\bm{q}}^\prime,\ddot{\bm{q}}^\prime)\in\mathbb{R}^N$ denoted by
\begin{equation} % Multidimensional tau vector GP
\bm{\tau} \sim \bm{\mathcal{GP}}\left(\bm{m}_{\bm{\tau}}(\bm{q};\dot{\bm{q}},\ddot{\bm{q}}), \bm{K}_{\bm{\tau}}(\bm{q},\bm{q}^\prime;\dot{\bm{q}},\ddot{\bm{q}},\dot{\bm{q}}^\prime,\ddot{\bm{q}}^\prime)\right) \label{tau_vec_l_gp}
\end{equation}
with mean vector $\bm{m}_{\bm{\tau}}=\bm{\mathcal{L}}_{\bm{q}} m_L$ and kernel matrix $\bm{K}_{\bm{\tau}}=\bm{\mathcal{L}}_{\bm{q}} \bm{\mathcal{L}}_{\bm{q}^\prime}^\top k_L$. As indicated in \eqref{tau_vec_l_gp} via the semicolon, $\dot{\bm{q}}$ and $\ddot{\bm{q}}$ have the role of regressors, whereas $\bm{q}$ remains a conventional input. %Due to \eqref{kinetic_GP}--\eqref{cholesky_factor_R_mat}, and as

% Joint GP distribution
\subsection{Joint GP Distribution} Finally, we assert a joint GP for the energies \eqref{kinetic_GP}--\eqref{potential_GP} and torques \eqref{tau_vec_l_gp}. Including for $\bm{\gamma}^\top(\bm{q}) \coloneqq [V(\bm{q}), \nabla_{\bm{q}}^\top V(\bm{q})]$ the equilibrium condition $$\bm{\gamma}_{\bm{0}}\coloneqq\bm{\gamma}(\bm{0})=\bm{0}$$ in alignment with Assumption \ref{ass_potential}, we write
\begin{equation} % Joint Gaussian distribution
\begin{bmatrix} T \\ V \\ \bm{\gamma_0} \\ \bm{y} \end{bmatrix}\! \sim \mathcal{N}\left(\begin{bmatrix} m_T \\ m_V \\ \bm{0} \\ \bm{m}_{\bm{y}} \end{bmatrix}, \begin{bmatrix} \sigma_T^2 & 0 & \bm{0}^{\top} & \bm{k}_{\bm{y}T}^\top \\ 0 & \sigma_V^2 & \bm{k}_{\bm{0}V}^{\top} & \bm{k}_{\bm{y}V}^\top \\ \bm{0} & \bm{k}_{\bm{0}V} & \bm{K}_{\bm{0}} & \bm{K}_{\bm{y}\bm{0}}^\top \\ \bm{k}_{\bm{y}T} & \bm{k}_{\bm{y}V} & \bm{K}_{\bm{y}\bm{0}} & \bm{K}_{\bm{y}} \end{bmatrix} \right) \label{gauss_dist}
\end{equation}
dropping dependencies on all variables for notational brevity. Also, we have introduced: the stacked vector of outputs $\bm{y}=\mathrm{vec}(\bm{Y}^\top)$ with mean $\bm{m}_{\bm{y}}=[\bm{m}_{\bm{\tau}}(\bm{q}_i;\dot{\bm{q}}_i,\ddot{\bm{q}}_i)]$ and covariance
\begin{equation} % variance matrix K_y
\bm{K}_{\bm{y}}=\begin{bmatrix} \bm{K}_{\bm{\tau}}(\bm{q}_i,\bm{q}_j;\dot{\bm{q}}_i,\ddot{\bm{q}}_i,\dot{\bm{q}}_j,\ddot{\bm{q}}_j) \end{bmatrix}+\oplus_{i}\bm{\Sigma}_{\bm{\varepsilon}_i} \label{K_y_eq} \, ,
\end{equation}
the energy variances $\sigma_T^2(\bm{q},\dot{\bm{q}})=k_T(\bm{q},\dot{\bm{q}},\bm{q},\dot{\bm{q}})$ and $\sigma_V^2(\bm{q})=k_V(\bm{q},\bm{q})$, the equilibrium variance
\begin{equation} % equilibrium variance matrix K_0
\bm{K}_{\bm{0}} = \begin{bmatrix} k_V(\bm{0},\bm{0}) & \nabla_{\bm{q}^\prime}^\top k_V(\bm{0},\bm{0}) \\ \nabla_{\bm{q}}k_V(\bm{0},\bm{0}) & \nabla_{\bm{q}}\nabla_{\bm{q}^\prime}^\top k_V(\bm{0},\bm{0}) \end{bmatrix} \nonumber
\end{equation}
and covariance $\bm{K}_{\bm{y}\bm{0}} = [\bm{k}_{\bm{y}V}(\bm{0}), \nabla_{\bm{q}}^\top\bm{k}_{\bm{y}V}(\bm{0})]$, as well as the remaining equilibrium-potential $\bm{k}_{\bm{0}V}(\bm{q})$ and Lagrangian-differential energy $\bm{k}_{\bm{y}T}(\bm{q},\dot{\bm{q}})$, $\bm{k}_{\bm{y}V}(\bm{q})$ covariances
\begin{subequations} % remaining covariance vectors
\begin{align}
\bm{k}_{\bm{0}V}(\bm{q}) &= \begin{bmatrix} k_V(\bm{0},\bm{q}) \\ \nabla_{\bm{q}_i} k_V(\bm{0},\bm{q}) \end{bmatrix} \, , \\
\bm{k}_{\bm{y}T}(\bm{q},\dot{\bm{q}}) &= \begin{bmatrix} \bm{\mathcal{L}}_{\bm{q}_i,\dot{\bm{q}}_i}(\dot{\bm{q}}_i,\ddot{\bm{q}}_i)k_T(\bm{q}_i,\dot{\bm{q}}_i,\bm{q},\dot{\bm{q}}) \end{bmatrix} \, , \label{covariance_kyT}\\
\bm{k}_{\bm{y}V}(\bm{q}) &= \left[\nabla_{\bm{q}_i}k_V(\bm{q}_i,\bm{q})\right] \, . \label{covariance_kyV}
\end{align}
\end{subequations}
The fully formulated L-GP \eqref{gauss_dist} is the first part of our contribution. Before moving on to analyze its physical consistency, we propose a method for partial input noise compensation.

% Nonlinear noise compensation
\subsection{Nonlinear Noise Compensation} The regressor structure stemming from the application of \eqref{lagrange_op} to \eqref{kin_en_kernel} allows for the compensation of noise in the differential inputs $\dot{\bm{q}}_i$ and $\ddot{\bm{q}}_i$. Therefore, as a last step, we combine \eqref{M_C_g_system} and \eqref{tau_vec_l_gp} in order to transform \eqref{noise_meas_model_tau} into %, measured via \eqref{noise_meas_model_dx}, in addition to the noise in the outputs \eqref{noise_meas_model_tau}.
\begin{equation} % Output model trafo
\bm{y}_i = \bm{\tau}(\bm{q}_i;\dot{\bm{x}}_i) + \tilde{\bm{\varepsilon}}_i \nonumber %\label{trafo_output_model}
\end{equation}
with the transformed composite noise variable
\begin{equation} % Composite noise variable
\tilde{\bm{\varepsilon}}_i= -(\nabla_{\dot{\bm{q}}_i}\nabla_{\dot{\bm{q}}_i}^\top T)\bm{\alpha}_i -(\nabla_{\dot{\bm{q}}_i}\nabla_{\bm{q}_i}^\top T)\bm{\omega}_i + \bm{\varepsilon}_i \nonumber \, .
\end{equation}
Approximating the product of two GRVs to remain Gaussian, we obtain a heteroscedastic (state-dependent) noise model
\begin{equation} % Heteroscedastic noise model
\tilde{\bm{\varepsilon}}_i \sim \mathcal{N}\left(\bm{0}, \bm{\Sigma}_{\tilde{\bm{\varepsilon}}_i}(\bm{q}_i,\dot{\bm{q}}_i)\right) \label{pi_gnm}\, ,
\end{equation} %, 
where $\bm{\Sigma}_{\tilde{\bm{\varepsilon}}_i}=\bm{\Sigma}_{\tilde{\bm{\alpha}}_i}(\bm{q}_i)+\bm{\Sigma}_{\tilde{\bm{\omega}}_i}(\bm{q}_i,\dot{\bm{q}}_i)+\bm{\Sigma}_{\bm{\varepsilon}_i}$. After extensive computations, it can be verified that the transformed $\bm{\Sigma}_{\tilde{\bm{\alpha}}_i}\coloneqq\mathrm{Var}[(\nabla_{\dot{\bm{q}}_i}\nabla_{\dot{\bm{q}}_i}^\top T)\bm{\alpha}_i]$ and $\bm{\Sigma}_{\tilde{\bm{\omega}}_i}\coloneqq\mathrm{Var}[(\nabla_{\dot{\bm{q}}_i}\nabla_{\bm{q}_i}^\top T)\bm{\omega}_i]$, exploiting stochastic independencies, are given by % given by%consisting of 
\begin{align} % Approximative noise variances
\bm{\Sigma}_{\tilde{\bm{\alpha}}_i} &= \bm{M}_0 \bm{\Sigma}_{\bm{\alpha}_i} \bm{M}_0 + (\bm{1}\!-\!\bm{I})\! \circ\! \bm{\Sigma}_{\bm{\alpha}_i}\! \circ\! \bm{\Theta}_{1ii} + \mathrm{diag}(\bm{\Theta}_{1ii} \bm{\sigma}_{\bm{\alpha}_i})  \nonumber \\
\bm{\Sigma}_{\tilde{\bm{\omega}}_i} &= \bm{C}_0 \bm{\Sigma}_{\bm{\omega}_i} \bm{C}_0^\top + \mathrm{diag}\left( \bm{\Gamma}(\bm{q}_i,\dot{\bm{q}}_i) \bm{\sigma}_{\bm{\omega}_i} \right)
\label{approx_var_mats}
\end{align}
with the inertial variance $\bm{\Theta}_{1ii}\coloneqq\bm{\Theta}_1(\bm{q}_i,\bm{q}_i)$, the Coriolis mean $\bm{C}_0(\bm{q}_i,\dot{\bm{q}}_i)=[\bm{M}_0(\bm{q}_i)\dot{\bm{q}}_i]\nabla_{\bm{q}_i}^\top$ and variance $\bm{\Gamma}(\bm{q}_i,\dot{\bm{q}}_i)=[\bm{\Theta}_1(\bm{q}_i,\bm{q}_i)\dot{\bm{q}}_i^{\circ2}](\nabla_{\bm{q}_i}^{\circ2})^\top$ matrices as well as the main diagonals vectors $\bm{\sigma}_{\bm{\alpha}_i}$ and $\bm{\sigma}_{\bm{\omega}_i}$ of $\bm{\Sigma}_{\bm{\alpha}_i}$ and $\bm{\Sigma}_{\bm{\omega}_i}$, respectively.
 
% Proposition 1: Consistent estimative distribution
\begin{proposition}
The distribution \eqref{pi_gnm} with state-dependent variances \eqref{approx_var_mats} is a stochastically consistent approximation which becomes exact for the limit case $\bm{\Sigma}_{\bm{\alpha}_i},\bm{\Sigma}_{\bm{\omega}_i}\to\bm{0}$, or the case of certainly known $T=m_T$ with $k_T\equiv0$. % based on a differential perspective equivalent to the linearization approach of contraction theory \cite{} for the stability analysis of nonlinear systems, 
\end{proposition}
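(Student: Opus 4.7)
The plan is to establish stochastic consistency by verifying that the first two moments of the Gaussian approximation \eqref{pi_gnm} coincide with those of the true composite noise variable $\tilde{\bm{\varepsilon}}_i$, and then to check that both limiting cases collapse the product terms into something either trivial or exactly Gaussian. The Gaussian model for a sum of Gaussians times independent GP-distributed matrices is a \emph{moment-matching} approximation; consistency in this sense amounts to showing that the claimed variance \eqref{approx_var_mats} is actually the exact covariance of $\tilde{\bm{\varepsilon}}_i$ (only higher cumulants are discarded).

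First I would argue that $\mathrm{E}[\tilde{\bm{\varepsilon}}_i]=\bm{0}$: since the matrix-valued GP entries $h_{1nm}(\bm{q}_i)$ defining $\bm{M}=\nabla_{\dot{\bm{q}}_i}\nabla_{\dot{\bm{q}}_i}^\top T$ and the Coriolis generator $\nabla_{\dot{\bm{q}}_i}\nabla_{\bm{q}_i}^\top T$ are stochastically independent of the zero-mean measurement noises $\bm{\alpha}_i,\bm{\omega}_i,\bm{\varepsilon}_i$, each product expectation factors into $\mathrm{E}[h_{1nm}]\,\mathrm{E}[\alpha_{ij}]=0$ (analogously for $\omega$). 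Next, by mutual uncorrelatedness of the three noise sources, $\mathrm{Var}[\tilde{\bm{\varepsilon}}_i]$ decomposes additively into the three summands appearing in $\bm{\Sigma}_{\tilde{\bm{\varepsilon}}_i}$. For the inertial contribution I would compute componentwise
\begin{equation}
\mathrm{Cov}[(\bm{M}\bm{\alpha}_i)_k,(\bm{M}\bm{\alpha}_i)_l]=\sum_{j,j'}\mathrm{E}[M_{kj}M_{lj'}](\Sigma_{\bm{\alpha}_i})_{jj'} \nonumber
\end{equation}
and split $\mathrm{E}[M_{kj}M_{lj'}]=M_{0,kj}M_{0,lj'}+\mathrm{Cov}[M_{kj},M_{lj'}]$. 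The first piece reassembles into $\bm{M}_0\bm{\Sigma}_{\bm{\alpha}_i}\bm{M}_0$ using the symmetry of $\bm{M}_0$; the second piece, combined with the Cholesky covariance structure $\theta_{1nm}=\bm{\rho}_{1n}^\top\bm{\rho}_{1m}$ of $\bm{H}_1$ (which under the independence of distinct off-diagonal $h_{1nm}$'s localizes the covariance to the pattern $(k,l)=(j,j')$ or $(k,l)=(j',j)$), produces precisely the Hadamard correction $(\bm{1}-\bm{I})\circ\bm{\Sigma}_{\bm{\alpha}_i}\circ\bm{\Theta}_{1ii}+\mathrm{diag}(\bm{\Theta}_{1ii}\bm{\sigma}_{\bm{\alpha}_i})$. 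A parallel calculation with $\nabla_{\dot{\bm{q}}_i}\nabla_{\bm{q}_i}^\top T$ and the quadratic prior in $\dot{\bm{q}}_i$ yields $\bm{\Sigma}_{\tilde{\bm{\omega}}_i}$.

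For the two limiting cases: (i) if $\bm{\Sigma}_{\bm{\alpha}_i},\bm{\Sigma}_{\bm{\omega}_i}\to\bm{0}$, then $\bm{M}\bm{\alpha}_i$ and $(\nabla_{\dot{\bm{q}}_i}\nabla_{\bm{q}_i}^\top T)\bm{\omega}_i$ vanish in mean-square, so $\tilde{\bm{\varepsilon}}_i\to\bm{\varepsilon}_i\sim\mathcal{N}(\bm{0},\bm{\Sigma}_{\bm{\varepsilon}_i})$, and \eqref{approx_var_mats} reduces to $\bm{\Sigma}_{\bm{\varepsilon}_i}$; (ii) if $T\equiv m_T$ with $k_T\equiv 0$, then $\bm{\Theta}_1\equiv\bm{0}$, both $\bm{M}=\bm{M}_0$ and $\bm{C}=\bm{C}_0$ are deterministic, and $\tilde{\bm{\varepsilon}}_i=-\bm{M}_0\bm{\alpha}_i-\bm{C}_0\bm{\omega}_i+\bm{\varepsilon}_i$ is an exact linear combination of jointly Gaussian vectors, hence exactly Gaussian with variance $\bm{M}_0\bm{\Sigma}_{\bm{\alpha}_i}\bm{M}_0+\bm{C}_0\bm{\Sigma}_{\bm{\omega}_i}\bm{C}_0^\top+\bm{\Sigma}_{\bm{\varepsilon}_i}$, matching \eqref{approx_var_mats} once the Hadamard correction terms drop out. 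The main obstacle I anticipate is keeping track of the Cholesky-induced correlations among the entries of $\bm{H}_1$ cleanly enough to recover the precise Hadamard pattern in \eqref{approx_var_mats}; everything else follows from routine independence arguments and the linearity of the Gaussian in the deterministic limit.
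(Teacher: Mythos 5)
Your proposal is correct, and for the part of the statement the paper actually proves --- the two limit cases --- your argument is essentially identical to the paper's: as $\bm{\Sigma}_{\bm{\alpha}_i},\bm{\Sigma}_{\bm{\omega}_i}\to\bm{0}$ the composite noise collapses to $\bm{\varepsilon}_i$, and when $k_T\equiv0$ the maps $\tilde{\bm{\alpha}}_i=-\bm{M}_0\bm{\alpha}_i$ and $\tilde{\bm{\omega}}_i=-\bm{C}_0\bm{\omega}_i$ are deterministic linear transformations of Gaussians, hence exactly Gaussian with the stated variances (the $\bm{\Theta}_1$-dependent correction terms vanishing). Where you diverge is in substantiating the phrase ``stochastically consistent approximation'': the paper's proof does not touch this at all and instead relies on the variance formulas \eqref{approx_var_mats} being asserted in the preceding text ``after extensive computations,'' whereas you actually carry out the moment matching, showing $\mathrm{E}[\tilde{\bm{\varepsilon}}_i]=\bm{0}$ by independence of the matrix-valued GP from the noise, and decomposing $\mathrm{E}[M_{kj}M_{lj'}]$ into the mean product (giving $\bm{M}_0\bm{\Sigma}_{\bm{\alpha}_i}\bm{M}_0$) plus the entry covariances (giving the Hadamard and diagonal corrections). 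This is a strictly more informative argument. One caution on the step you yourself flag as the obstacle: recovering the exact pattern $(\bm{1}-\bm{I})\circ\bm{\Sigma}_{\bm{\alpha}_i}\circ\bm{\Theta}_{1ii}+\mathrm{diag}(\bm{\Theta}_{1ii}\bm{\sigma}_{\bm{\alpha}_i})$ requires the entry covariances to be supported only on the symmetric index pairs, but the Cholesky construction $\theta_{1nm}=\bm{\rho}_{1n}^\top\bm{\rho}_{1m}$ generically correlates distinct entries sharing a column of $\bm{R}_1$, so the Hadamard formula is itself part of the approximation (consistent with the paper's use of ``exploiting stochastic independencies'') rather than the exact second moment; stating this explicitly would make your consistency claim precise.
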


\begin{proof}
The limit case $\bm{\Sigma}_{\bm{\alpha}_i},\bm{\Sigma}_{\bm{\omega}_i}\to\bm{0}$ leads to noisefree inputs with $\tilde{\bm{\varepsilon}}_i=\bm{\varepsilon}_i \sim \mathcal{N}(\bm{0}, \bm{\Sigma}_{\bm{\varepsilon}_i})$. The case of known $T$ with certainty $k_T\equiv 0$ leads to $\bm{M}_0\equiv\bm{M}$ and $\bm{\Sigma}_{\tilde{\bm{\alpha}}_i} = \bm{M}_0 \bm{\Sigma}_{\bm{\alpha}_i} \bm{M}_0^\top$, respectively. Since the transformation $\tilde{\bm{\alpha}}_i=-\bm{M}\bm{\alpha}$ is linear, $\tilde{\bm{\alpha}}_i\sim\mathcal{N}(\bm{0},\bm{\Sigma}_{\tilde{\bm{\alpha}}_i})$ holds exactly. Proceeding analgously for $\tilde{\bm{\omega}}_i=-(\nabla_{\dot{\bm{q}}_i}\nabla_{\bm{q}_i}^\top T)\bm{\omega}$, we arrive at \eqref{pi_gnm}.
\end{proof}

By casting from the temporal into the spatial domain, \eqref{pi_gnm} unifies the L-GP framework with techniques from nonlinear Kalman filtering \cite{kf_gp_rel}. For effective noise compensation in the inputs $\dot{\bm{q}}_i$ and $\ddot{\bm{q}}_i$, we replace the block matrices $\bm{\Sigma}_{\bm{\varepsilon}_i}$ in \eqref{K_y_eq} by $\bm{\Sigma}_{\tilde{\bm{\varepsilon}}_i}(\bm{q}_i,\bm{\xi}_i)$ from \eqref{pi_gnm}, where $\bm{\xi}_i\coloneqq[\dot{x}_{in}]$. Also, we compute the torque covariance matrix \eqref{K_y_eq} and the Lagrangian-differential kinetic covariance vector \eqref{covariance_kyT} by setting $\dot{\bm{x}}_i^\top=[\dot{\bm{q}}_i^\top,\ddot{\bm{q}}_i^\top]$, $\forall i,j\in\{1,\dots,D\}$.%opt_est_simon

%%%%%%%%%%%%%%%%%%%%%%%%%%%%%%%%%%%%%%%%%%%%%%%%%%%%%%%%%%%%%%%%%%%%%%%%%%%%%%%%
\section{INVARIANCE PROPERTIES}\label{inv_properties}

Having introduced the novel framework of L-GPs, we now investigate their properties and derive certain guarantees.

% Quadratic Form
\subsection{Quadratic Form}

Employing the kernel structure \eqref{quadratic_kernel_functional}, quadratic form of the energy GPs \eqref{kinetic_GP} and \eqref{elast_pot_GP} can be guaranteed analytically. In order to specify our result in Lemma~\ref{quad_form_lemma}, we require the following definition: for $\delta\bm{x}_i\in\mathbb{R}^N$ and fixed $\bm{q}$, the directional derivative of $\bm{\Theta}_k(\bm{q}_i,\bm{q})$ along $\delta\bm{x}_i$ at $\bm{q}_i$ gives 
\begin{equation} % Phi_k projected gradient matrix
\bm{\Phi}_k(\bm{q}_i,\delta\bm{x}_i,\bm{q})\coloneqq\left[\delta\bm{x}_i^\top\nabla_{\bm{q}_i}\theta_{knm}(\bm{q}_i,\bm{q})\right] \label{grad_mat}
\end{equation}
%\end{definition}
The matrix $\bm{\Phi}_k$ is the directional matrix-derivative of $\bm{\Theta}_k$ from \eqref{theta_matrix} moving through position $\bm{q}_i$ with velocity $\delta\bm{x}_i$. In particular, note that $$\bm{\Phi}_k(\bm{q}_i(\tau),\dot{\bm{q}}_i(\tau),\bm{q})=\dot{\bm{\Theta}}_k(\bm{q}_i(\tau),\bm{q})$$ holds for fixed $\bm{q}$ by deriving w.r.t.\ the virtual time $\tau$.

%% LEMMA 1: QUADRATIC STRUCTURE
\begin{lemma} \label{quad_form_lemma}
The mean estimates  $\hat{f}_k(\bm{q})\equiv\mathrm{E}[f_k(\bm{q})|\bm{y},\bm{\gamma_0}]$ of kinetic and elastic GPs \eqref{kinetic_GP} and \eqref{elast_pot_GP} in the joint model \eqref{gauss_dist} are analytically guaranteed to be of quadratic form:
\begin{equation} % Quadratic kinetic mean estimate
\hat{f}_k(\bm{q}) = \frac{1}{2} \overset{(k)}{\bm{q}}\vspace{-0.2cm}\hspace{-0.05cm}^{\top}\vspace{0.2cm} \hat{\bm{H}}_k(\bm{q})\overset{(k)}{\bm{q}} \label{quad_kinetic_mean} \, . 
\end{equation}
The posterior matrix estimates $\hat{\bm{H}}_k$ are decomposed as
\begin{equation} % Mass inertia estimate
\hat{\bm{H}}_k(\bm{q}) = \bm{H}_{k0}(\bm{q}) + \frac{1}{2}\sum_{i=1}^D\bm{N}_{ki}(\bm{q}) + \bm{N}_{ki}^\top(\bm{q}) \label{M_est}\, ,
\end{equation}
where the basis matrices $\bm{N}_{ki}$ are given by
\begin{align} % N_ki basis matrices
\begin{split}
\bm{N}_{ki}(\bm{q}) &= \frac{\partial^k}{\partial\tau^k} \left[\delta\bm{x}_i \overset{(k)}{\bm{q}}_{\hspace{-0.1cm}i}\vspace{-0.2cm}\hspace{-0.05cm}^{\top}\vspace{0.2cm}\!(\tau) \circ \bm{\Theta}_k(\bm{q}_i(\tau),\bm{q})\right] \\ 
&\quad + \frac{1}{2} (-1)^k \overset{(k)}{\bm{q}}_{\hspace{-0.1cm}i} \overset{(k)}{\bm{q}}_{\hspace{-0.1cm}i}\vspace{-0.2cm}\hspace{-0.05cm}^{\top}\vspace{0.2cm} \circ \bm{\Phi}_k(\bm{q}_i,\delta\bm{x}_i,\bm{q})
\end{split} \label{N_i_matrix}
\end{align}%with $[\dot{\bm{q}}_i^\top,\ddot{\bm{q}}_i^\top]\equiv\dot{\bm{x}}_i^\top$. 
with the subvectors $\delta\bm{x}_i = [\Delta x_n]_{n=1+(i-1)N}^{iN}$ of the transformed innovation $\Delta\bm{x} = \bm{K}_D^{-1}\Delta\bm{y}$, the Kalman-like kernel gain matrix $\bm{K}_{D}=\bm{K_y}-\bm{K}_{\bm{y0}} \bm{K}_{\bm{0}}^{-1} \bm{K}_{\bm{y0}}^\top$, and the innovation difference $\Delta\bm{y}=\bm{y}-\bm{m}_{\bm{y}}$.
\end{lemma}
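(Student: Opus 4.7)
My plan is to evaluate the GP posterior mean $\hat{f}_k(\bm{q}) = \mathrm{E}[f_k(\bm{q}) \mid \bm{y},\bm{\gamma}_0]$ directly from the joint distribution \eqref{gauss_dist}, then manipulate it into the quadratic form \eqref{quad_kinetic_mean} and read off $\hat{\bm{H}}_k$. The first ingredient is the Schur-complement block-inversion identity applied to the $(\bm{\gamma}_0,\bm{y})$ covariance block, which produces exactly the Kalman-like gain $\bm{K}_D = \bm{K}_{\bm{y}} - \bm{K}_{\bm{y}\bm{0}}\bm{K}_{\bm{0}}^{-1}\bm{K}_{\bm{y}\bm{0}}^\top$ and the transformed innovation $\Delta\bm{x} = \bm{K}_D^{-1}\Delta\bm{y}$ appearing in the statement. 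The second ingredient is that the equilibrium block does not contribute for either $k$: for $k=1$, independence of $T$ and $V$ gives $\mathrm{Cov}(T,\bm{\gamma}_0) = \bm{0}$; for $k=0$, the prefactor $\bm{q}$ in the quadratic structure \eqref{quadratic_kernel_functional} yields $\kappa_0(\bm{0},\cdot) = 0$ and $\nabla_{\bm{q}}\kappa_0(\bm{0},\cdot) = \bm{0}$, so $\bm{k}_{f_0,\bm{\gamma}_0}(\bm{q}) = \bm{0}$. Together with the quadratic priors $m_{f_k} = \tfrac{1}{2}\bm{q}^{(k)\top}\bm{H}_{k0}\bm{q}^{(k)}$, this reduces the posterior to $\hat{f}_k(\bm{q}) = m_{f_k}(\bm{q}) + \sum_{i=1}^D \delta\bm{x}_i^\top \bm{\iota}_{ki}(\bm{q})$, where $\bm{\iota}_{ki}$ denotes the $i$-th block of $\bm{k}_{f_k,\bm{y}}(\bm{q})$.

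The core task is then to show each summand is a quadratic form $\tfrac{1}{2}\bm{q}^{(k)\top}\bm{N}_{ki}(\bm{q})\bm{q}^{(k)}$ and to identify $\bm{N}_{ki}$. For $k=0$, I would expand $\nabla_{\bm{q}_i}\kappa_0(\bm{q}_i,\bm{q})$ by the product rule into (i) terms differentiating the $\bm{q}\circ\bm{q}_i$ factors and (ii) terms differentiating $\bm{\Theta}_0$. Contracting (i) with $\delta\bm{x}_i$ and applying the Hadamard identity $(\bm{q}\circ\bm{q}_i)^\top\bm{M}(\bm{q}\circ\bm{q}_i) = \bm{q}^\top(\bm{q}_i\bm{q}_i^\top\circ\bm{M})\bm{q}$ produces the summand $\delta\bm{x}_i\bm{q}_i^\top\circ\bm{\Theta}_0$; contracting (ii) with $\delta\bm{x}_i$ produces, by the definition \eqref{grad_mat} of $\bm{\Phi}_0$, the summand $\tfrac{1}{2}\bm{q}_i\bm{q}_i^\top\circ\bm{\Phi}_0(\bm{q}_i,\delta\bm{x}_i,\bm{q})$. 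Together these match \eqref{N_i_matrix} for $k=0$.

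For $k=1$, I would apply $\bm{\mathcal{L}}_{\bm{q}_i,\dot{\bm{q}}_i} = \nabla_{\dot{\bm{q}}_i}^\top\ddot{\bm{q}}_i\nabla_{\dot{\bm{q}}_i} + \nabla_{\bm{q}_i}^\top\dot{\bm{q}}_i\nabla_{\dot{\bm{q}}_i} - \nabla_{\bm{q}_i}$ to $\kappa_1$ piece by piece. Since $\bm{\Theta}_1$ is independent of $\dot{\bm{q}}_i$, the same Hadamard identity shows that the first term, contracted with $\delta\bm{x}_i$, produces $\delta\bm{x}_i\ddot{\bm{q}}_i^\top\circ\bm{\Theta}_1$; the mixed term produces $\delta\bm{x}_i\dot{\bm{q}}_i^\top\circ\bm{\Phi}_1(\bm{q}_i,\dot{\bm{q}}_i,\bm{q})$; and the last term yields $-\tfrac{1}{2}\dot{\bm{q}}_i\dot{\bm{q}}_i^\top\circ\bm{\Phi}_1(\bm{q}_i,\delta\bm{x}_i,\bm{q})$. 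The key observation is that if one introduces a virtual curve $(\bm{q}_i(\tau),\dot{\bm{q}}_i(\tau))$ with $\bm{q}_i(0)=\bm{q}_i$, $\dot{\bm{q}}_i(0)=\dot{\bm{q}}_i$, and $\ddot{\bm{q}}_i(0)=\ddot{\bm{q}}_i$, the first two summands recombine into $\partial/\partial\tau[\delta\bm{x}_i\dot{\bm{q}}_i^\top(\tau)\circ\bm{\Theta}_1(\bm{q}_i(\tau),\bm{q})]$ by the chain rule and \eqref{grad_mat}, matching \eqref{N_i_matrix} for $k=1$. Symmetrising via $\tfrac{1}{2}\bm{q}^{(k)\top}\bm{N}_{ki}\bm{q}^{(k)} = \tfrac{1}{4}\bm{q}^{(k)\top}(\bm{N}_{ki}+\bm{N}_{ki}^\top)\bm{q}^{(k)}$ and summing over $i$ then delivers the decomposition \eqref{M_est}.

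The main obstacle is the bookkeeping in the $k=1$ case: the three differential actions in $\bm{\mathcal{L}}_{\bm{q}_i,\dot{\bm{q}}_i}$ interact with the nested quadratic and Hadamard structure of $\kappa_1$ in different ways, and the compact virtual-time-derivative packaging in \eqref{N_i_matrix} only becomes visible after carefully combining the $\ddot{\bm{q}}_i$-weighted and $\dot{\bm{q}}_i$-weighted contributions. The $k=0$ case plays the role of a simpler prototype in which only the Hadamard identity is needed.
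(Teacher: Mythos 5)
Your proposal is correct and follows essentially the same route as the paper: marginalize and condition on $(\bm{y},\bm{\gamma_0})$, use the partitioned-matrix (Schur-complement) inversion to produce $\bm{K}_D$ and $\Delta\bm{x}$, note that the equilibrium cross-covariance of $f_k$ vanishes, apply $\bm{\mathcal{L}}_{\bm{q}_i,\dot{\bm{q}}_i}$ to the quadratic kernel $\kappa_k$, convert each contraction $\delta\bm{x}_i^\top\bm{l}_{ki}$ into a quadratic form via the Hadamard identity and the definition of $\bm{\Phi}_k$, and symmetrize. You in fact supply the "differential vector-algebraic computations" the paper omits for space, including the correct recombination of the $\ddot{\bm{q}}_i$- and $\dot{\bm{q}}_i$-weighted terms into the virtual-time derivative $\frac{\partial}{\partial\tau}[\delta\bm{x}_i\dot{\bm{q}}_i^\top(\tau)\circ\bm{\Theta}_1(\bm{q}_i(\tau),\bm{q})]$.
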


%% PROOF LEMMA 1
\begin{proof}
We set $G\equiv0$ w.l.o.g., leading to $V=U\equiv f_0$ and $\bm{k}_{\bm{0}V}=\bm{0}$ due to \eqref{pot_en_kernel} and \eqref{quadratic_kernel_functional}. Marginalizing \eqref{gauss_dist} over the complementary $f_{\bar{k}}$ for $\{\bar{k}\}=\{0,1\}\setminus\{k\}$ and conditioning on $\bm{y}$ and $\gamma_{\bm{0}}$ yields another Gaussian distribution with mean %$\hat{f}_k$ given by
\begin{equation} % Conditional mean
\hat{f}_k(\bm{q}) = m_{f_k}\!(\bm{q}) + \begin{bmatrix} \bm{0}^\top \!&\! \bm{k}_{\bm{y}f_k}^\top\!(\bm{q}) \end{bmatrix} \begin{bmatrix} \bm{K_0} & \bm{K_{y0}}^\top \\ \bm{K_{y0}} & \bm{K_y} \end{bmatrix}^{-1} \begin{bmatrix} \bm{\gamma_0} \\ \Delta\bm{y} \end{bmatrix} \nonumber \, .%\label{cond_mean_kin_first_stage}
\end{equation}
Then, applying standard inversion formulas for partitioned matrices together with $\bm{\gamma_0}=\bm{0}$, we can write
\begin{equation} % Conditional mean
\hat{f}_k(\bm{q})\! =\! m_{f_k}\!(\bm{q}) + \bm{k}_{\bm{y}f_k}^\top\!(\bm{q})(\bm{K_y}\!-\!\bm{K}_{\bm{y0}} \bm{K}_{\bm{0}}^{-1}\! \bm{K}_{\bm{y0}}^\top)^{-1}\Delta\bm{y} \label{cond_mean_kin_init} \, .
\end{equation}
Defining $\bm{l}_{ki}(\bm{q}) \equiv \bm{\mathcal{L}}_{\bm{q}_i}\kappa_k(\bm{q}_i,\bm{q})$ as the application of \eqref{lagrange_op} to \eqref{quadratic_kernel_functional} such that $\bm{k}_{\bm{y}f_k} = \left[\bm{l}_{ki}\right]$, it can be verified that
\begin{align} % l_li vector
\bm{l}_{ki}(\bm{q}) &= \frac{1}{2}\frac{\partial^k}{\partial\tau^k}\left[ \overset{(k)}{\bm{q}}\!(t) \overset{(k)}{\bm{q}}_{\hspace{-0.1cm}i}\vspace{-0.2cm}\hspace{-0.05cm}^{\top}\vspace{0.2cm}\!(\tau) \circ \bm{\Theta}_k(\bm{q}_i(\tau),\bm{q}(t)) \right] \overset{(k)}{\bm{q}}\!(t) \nonumber \\
& \quad + \frac{1}{4} (-1)^k \sum_n \sum_m \overset{(k)}{q}_{\hspace{-0.115cm}in}\! \overset{(k)}{q}_{\hspace{-0.115cm}im} \!\overset{(k)}{q}_{\hspace{-0.115cm}n} \! \overset{(k)}{q}_{\hspace{-0.115cm}m} \nabla_{\bm{q}_i} \theta_{knm}(\bm{q}_i,\bm{q})\nonumber %\label{l_T_expr} 
\end{align}%where $\dot{\bm{x}}_i^\top\equiv[\dot{\bm{q}}_i^\top,\ddot{\bm{q}}_i^\top]$. 
holds after some differential vector-algebraic computations, where we utilize among other intermediate steps that 
\begin{equation}
	\nabla_{\dot{\bm{q}}_i}\nabla_{\bm{q}_i}^\top \kappa_k(\bm{q}_i,\bm{q}) = \begin{cases} 0\, , & k=0 \\ \frac{1}{2}\dot{\bm{q}}\dot{\bm{q}}^\top \circ \bm{\Phi}_k(\bm{q}_i,\dot{\bm{q}}_i,\bm{q}) \, , & k = 1 \end{cases} \, .\notag
\end{equation}
The full derivation is omitted due to space limitations. Further exploiting the commutativity of the Hadamard product and the definition of the projected gradient matrix $\bm{\Phi}_k$ in \eqref{grad_mat}, we follow that $\delta\bm{x}_i^\top\bm{l}_{ki}(\bm{q}) = \frac{1}{2}\bm{q}^{(k)\top} \bm{N}_{ki}(\bm{q}) \bm{q}^{(k)}$ with $\bm{N}_{ki}$ as in \eqref{N_i_matrix}. Then, eliminating the skew-symmetric part of $\bm{N}_{ki}$ combined with the quadratic priors $m_{f_k}$ from Section~\ref{ssec_quad_pdf_strucs}, we reformulate \eqref{cond_mean_kin_init} using $\bm{k}_{\bm{y}f_k}^\top\bm{K}_D^{-1}\Delta\bm{y}=\sum_i\delta\bm{x}_i^\top\bm{l}_{ki}$ and obtain \eqref{quad_kinetic_mean}--\eqref{M_est}.%, thus completing the proof.
\end{proof}

Essentially, Lemma \ref{quad_form_lemma} follows from the quadratic kernel structure \eqref{quadratic_kernel_functional}. The summation \eqref{M_est} represents an extension of matrix decompositions from vector to matrix spaces which are aligned with the Lagrangian structure \eqref{lagrange_nabla_eqs_of_mot}--\eqref{M_C_g_system}.

% Positive Definiteness
\subsection{Positive Definiteness} %Building on the result of quadratic form preservation in Lemma \ref{quad_form_lemma}, 

We now move on to investigate positive definiteness of the matrix estimates $\hat{\bm{H}}_k$. Before that, however, we impose a minor structural restriction on the used kernel functions.%, as described in the following.

% Assumption 4: Gaussian radial basis functions
\begin{assumption} \label{iso_quad_met_class_ass}
The Cholesky matrix kernel components $r_{knm}$ from \eqref{cholesky_factor_R_mat} and the gravitational covariance $k_G$ from \eqref{grav_pot_GP} belong to the class of metric kernels $\mathcal{M}$ given by
\begin{equation} % Metric kernel class M
\mathcal{M} = \big\{ \bm{\Lambda}\succ\bm{0} \,\big|\, k_{\mathcal{M}}(|\bm{d}|) = \sigma^2 \exp \big( - 1/2 \bm{d}^\top \bm{\Lambda} \bm{d} \big) \big\}\label{k_G_form}
\end{equation}
parametrized by $\sigma_{knm}, \bm{\Lambda}_{knm}$ and $\sigma_{G}, \bm{\Lambda}_{G}$, respectively.
\end{assumption}

Note that class $\mathcal{M}$ in Assumption \ref{iso_quad_met_class_ass} includes a wide variety of covariance functions including the commonly used squared exponential kernel \cite{rasmussen_gp_mit_book}. Therein, kernels $k(\bm{q},\bm{q}^\prime)\equiv k_{\mathcal{M}}(|\bm{d}|)$ are Gaussian radial basis functions. They depend only on the squared length of the difference
\begin{equation}
\bm{d}\coloneqq\bm{q}-\bm{q}^\prime
\end{equation}
in the Riemann space \cite[p.~243]{lovlelock_metrics} defined by metric $\bm{\Lambda}\succ\bm{0}$. The special case of variance $\sigma^{2}=(2\pi)^{-N/2}|\det\bm{\Lambda}^{1/2}|$ leads to i.i.d. $\bm{d}\sim\mathcal{N}(\bm{0},\bm{\Lambda}^{-1})$. %Defining $\bm{\Lambda}=\bm{D}^\top\bm{D}$, where $\bm{D}\in\mathbb{R}^{N\times N}$, we obtain a Gaussian distribution $\bm{d}_{ij}\sim\mathcal{N}(\bm{0},\bm{\Lambda}^{-1})$ for the special case $\sigma^{2}=(2\pi)^{-N/2}|\det\bm{D}|$. 

We can now explicitly state the components $r_{knm}$ of the Cholesky factor $\bm{R}_k$ from \eqref{cholesky_factor_R_mat}. Introducing for $0\leq n\leq m\leq N$ the functions $\tilde{r}_{knm} \in \mathcal{M}$ from Assumption~\ref{iso_quad_met_class_ass} with identical Riemannian hypermetrics $\bm{\Lambda}_{knm}=\bm{\Lambda}_{k}$, we set
\begin{equation}
r_{knm}\equiv\tilde{r}_{knm}\, , \!\quad\! \bm{\Sigma}_{f_k}\coloneqq[\sigma^2_{knm}] \, , \!\quad\! \bm{\Sigma}_{\bm{d}_k}\coloneqq\bm{\Lambda}^{-1}_{k} \label{r_knm_spec}\, .
\end{equation}
Note that the hypervariance $\bm{\Sigma}_{f_k}$ is upper-triangular.
%In practice, even identical Euclidian hypermetrics may be sufficient for accurate approximation, as shown in Section~\ref{sim_sec}. %We now proceed to the following definition.

We now make a stochastic analysis of the data set $\mathcal{D}$ gathered via \eqref{noise_meas_model_dx}--\eqref{noise_meas_model_tau}, enabling the probabilistic investigation of positive definiteness preservation. Based on the product ${\bm{p}}_{i}^{(n)}$ and difference $\bm{d}_j$ vectors %$\mathcal{D}=\{\bm{Q},\dot{\bm{X}},\bm{Y}\}$
\begin{subequations}
\begin{align}
\overset{(n)}{\bm{p}}_{\hspace{-0.1cm}i} &\coloneqq \delta\bm{x}_i \circ \overset{(n)}{\bm{q}}_{\hspace{-0.1cm}i}\, , \quad \forall n\in\{k,2k\} \, , \\
\bm{d}_j &\coloneqq \bm{q} - \bm{q}_j \, , \quad \forall j=1,\dots,D \label{diff_vec_d_j}\, ,
\end{align}
\end{subequations}
we define for $\bm{\mu}_k, \bm{\nu}_k \in \mathbb{R}^{D}$ the composite random variable
\begin{equation}
\bm{\beta}_k\coloneqq\bm{\mu}_k+\bm{\nu}_k \label{beta_eq_pos}
\end{equation}
for those $i\in\{1,\dots,D\}$ which satisfy component-wise $\forall n\in\{k,2k\}$, $k\in\{0,1\}$, the condition ${\bm{p}}_{i}^{(n)}>\bm{0}$, by
\begin{align} % Eigenvalue vectors
\begin{split}
\mu_{ki} &= \Upsilon_k\Big(\overset{(2k)}{\bm{p}}_{\hspace{-0.165cm}i}\hspace{0.05cm}\Big)\det\mathrm{diag}\Big(\bm{\varrho}_k^{\downarrow}\circ \overset{(2k)}{\bm{p}}_{\hspace{-0.165cm}i}\vspace{-0.2cm}\hspace{0.025cm}^{\uparrow}\vspace{0.2cm}\Big) \\
  &\quad+ 2k\dot{\vartheta}_{ki} \begin{cases}
    \Upsilon_k\Big(\overset{(k)}{\bm{p}}_{\hspace{-0.1cm}i}\Big)\det\mathrm{diag}\Big(\bm{\varrho}_k^{\downarrow}\circ \overset{(k)}{\bm{p}}_{\hspace{-0.1cm}i}\vspace{-0.2cm}\hspace{-0.05cm}^{\uparrow}\vspace{0.2cm}\Big) & \dot{\vartheta}_{ki} \geq0 \\
    \varrho_{k1} \max_n \overset{(k)}{p}_{\hspace{-0.115cm}in} & \dot{\vartheta}_{ki}<0
  \end{cases} \\
  \nu_{ki} &= \varphi_{ki} \begin{cases}
  	\Upsilon_k\Big(\overset{(k)}{\bm{q}}_{\hspace{-0.1cm}i}\vspace{-0.2cm}\hspace{-0.05cm}^{\circ2}\vspace{0.2cm}\Big)\det\mathrm{diag}\Big(\bm{\varrho}_k^{\downarrow} \circ \overset{(k)}{\bm{q}}_{\hspace{-0.1cm}i}\vspace{-0.2cm}\hspace{-0.05cm}^{\circ2\uparrow}\vspace{0.2cm}\Big) & \varphi_{ki} \geq0 \\
  	\varrho_{k1} \max_n \overset{(k)}{q}_{\hspace{-0.115cm}in}\vspace{-0.2cm}\hspace{-0.2cm}^{2}\vspace{0.2cm} & \varphi_{ki}<0
    %\varrho_{k1} \max_n \overset{(k)}{q}_{\hspace{-0.115cm}in}\vspace{-0.2cm}\hspace{-0.2cm}^{2}\vspace{0.2cm} & \varphi_{ki} \geq0 \\
    %\Upsilon_k\Big(\overset{(k)}{\bm{q}}_{\hspace{-0.1cm}i}\vspace{-0.2cm}\hspace{-0.05cm}^{\circ2}\vspace{0.2cm}\Big)\det\mathrm{diag}\Big(\bm{\varrho}_k^{\downarrow} \circ \overset{(k)}{\bm{q}}_{\hspace{-0.1cm}i}\vspace{-0.2cm}\hspace{-0.05cm}^{\circ2\uparrow}\vspace{0.2cm}\Big) & \varphi_{ki}<0
  \end{cases}
\end{split} \nonumber
\end{align}
with constant, decreasingly ordered, radial vector $\bm{\varrho}_k^{\downarrow}\in\mathbb{R}^N$, $\bm{\varrho}_k^{\downarrow}>\bm{0}$. Here, the projection $\varphi_{ki}$, the differential angle $\dot{\vartheta}_{ki}$ and the normalizing function $\Upsilon_k(\bm{p}_i)$ are given by  
\begin{subequations}
\begin{align}
\varphi_{ki}(\delta\bm{x}_i) &= (-1)^{k}\bm{d}_i^\top \bm{\Lambda}_k \delta\bm{x}_i   \, , \label{projection_phi_ki}\\
\dot{\vartheta}_{ki}(\bm{q}_i(\tau)) &= \varphi_{ki}(\dot{\bm{q}}_i(\tau)) \, , \\
\Upsilon_k(\bm{p}_i)&=1/\Pi_{j=1}^{N-1}\lambda_j(\bm{\Sigma}_{f_k}^2\mathrm{diag}(\bm{p}_i)) \, .
\end{align}
\end{subequations}
If ${\bm{p}}_{i}^{(n)}>\bm{0}$ does not hold $\forall n\in\{k,2k\}$, we define
\begin{equation}
\beta_{ki}\coloneqq\exp(\lVert\bm{d}_i\rVert_{\bm{\Lambda}_k}^2)\lambda_N(\hat{\bm{H}}_{ki})\, ,\quad \overset{(n)}{\bm{p}}_{\hspace{-0.1cm}i}\ngtr \bm{0} \, , \label{beta_k_neg}
\end{equation}
where $\hat{\bm{H}}_{ki}=\frac{1}{2}(\bm{N}_{ki}+\bm{N}_{ki}^\top)$ from \eqref{M_est}. %where, in the process noisefree case, $\dot{\bm{X}}^\top=\big[{\bm{q}}_{j}^{(n)}\big]_{n,j=1}^{2,D}$, and $\delta\bm{x}_i=[K_{D,nm}^{-1}]_{n=1+(i-1N),m=1}^{iN,DN}(\bm{y}-\bm{m_y})$. %$\bm{Q}=[\bm{q}_i^\top]_{i=1}^D$ and
%\end{definition}
The random variable $\bm{\beta}_k$ from \eqref{beta_eq_pos}--\eqref{beta_k_neg} is the result of nonlinearly transformed GRVs stemming from the data set $\mathcal{D}$. We now formulate the main result. 
%=\{\bm{Q},\dot{\bm{X}},\bm{Y}\}$. gathered according to \eqref{noise_meas_model_dx}--\eqref{noise_meas_model_tau}. 

%, as demonstrated later in our numerical evaluation.

% PDF Theorem
\begin{theorem}\label{th1}
Consider the posterior, generalized, matrix-valued L-GP estimate $\hat{\bm{H}}_k(\bm{q})$ from \eqref{M_est} with metric covariances $r_{knm}\in\mathcal{M}$ forming the Cholesky kernel $\bm{\Theta}_k(\bm{q}_i,\bm{q})$ from \eqref{theta_matrix} identically specified by  Riemannian hypermetric $\bm{\Sigma}^{-1}_{\bm{d}_k}$ via Assumption \ref{iso_quad_met_class_ass} and \eqref{r_knm_spec}. Positive definiteness of the generalized matrix estimate is guaranteed with probability
\begin{equation} % PDF probability
\mathrm{Pr}\{\hat{\bm{H}}_k(\bm{q})\succ\bm{0}\} = 1-\delta_k(\bm{q})\textcolor{blue}{\, ,}  \label{pdf_prob_eq_theorem}
\end{equation}
where the physical inconsistency measure $\delta_k(\bm{q})$ is upper bounded according to
\begin{equation} % Upper bound physical inconsistency measure
\delta_k(\bm{q}) \leq \mathrm{Pr}\left\{\lambda_{N}(\bm{H}_{k0}) + \exp\left[-\mathrm{vec}(\bm{\Sigma}^{-1}_{\bm{d}_k})^\top\bm{D}(\bm{q})\right]\bm{\beta}_k \leq0\right\} \nonumber%\label{pdf_bound}
\end{equation}
based on the column-wise Kronecker-constructed squared distance matrix
\begin{equation}
	\bm{D}(\bm{q})=[\bm{d}_j(\bm{q})\otimes\bm{d}_j(\bm{q})] \nonumber%\label{kronecker_dist_mat}
\end{equation}
which depends only on the absolute values of the positional joint distances $|\bm{d}_j(\bm{q})|=|\bm{q}-\bm{q}_j|$.% of the joint position $\bm{q}$ from the observed locations $\bm{q}_j$.
\end{theorem}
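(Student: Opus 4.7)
The plan is to bound the positive-definiteness failure probability via Weyl's inequality applied to $\hat{\bm{H}}_k=\bm{H}_{k0}+\sum_{i=1}^D\hat{\bm{H}}_{ki}$ with $\hat{\bm{H}}_{ki}\coloneqq\tfrac{1}{2}(\bm{N}_{ki}+\bm{N}_{ki}^\top)$ read off from \eqref{M_est}. Since $\hat{\bm{H}}_k\succ\bm{0}$ iff $\lambda_N(\hat{\bm{H}}_k)>0$, iterated Weyl gives $\lambda_N(\hat{\bm{H}}_k)\geq\lambda_N(\bm{H}_{k0})+\sum_i\lambda_N(\hat{\bm{H}}_{ki})$, so by event inclusion the claim reduces to establishing the pointwise almost-sure inequality $\lambda_N(\hat{\bm{H}}_{ki})\geq\exp(-\bm{d}_i^\top\bm{\Lambda}_k\bm{d}_i)\,\beta_{ki}$ for each $i$. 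The vectorised exponent in the theorem then follows from the Kronecker identity $\bm{d}_i^\top\bm{\Lambda}_k\bm{d}_i=\mathrm{vec}(\bm{\Lambda}_k)^\top(\bm{d}_i\otimes\bm{d}_i)=\mathrm{vec}(\bm{\Sigma}_{\bm{d}_k}^{-1})^\top[\bm{D}(\bm{q})]_{\cdot i}$, and the $|\bm{d}_j|$-only dependence from the sign-insensitivity of the quadratic form induced by $\bm{\Lambda}_k\succ\bm{0}$.

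To decouple the exponential scalar from the data-dependent matrix factor, I would invoke Assumption~\ref{iso_quad_met_class_ass} together with specification \eqref{r_knm_spec}: because all Cholesky components $r_{klm}$ share the same hypermetric $\bm{\Lambda}_k$, each $\theta_{knm}(\bm{q}_i,\bm{q})$ factorises as the product of $\exp(-\bm{d}_i^\top\bm{\Lambda}_k\bm{d}_i)$ and a $\bm{q}$-independent PSD weight $[\bm{\Sigma}_{f_k}^\top\bm{\Sigma}_{f_k}]_{nm}$, while $\nabla_{\bm{q}_i}\theta_{knm}=-2\bm{\Lambda}_k\bm{d}_i\cdot\theta_{knm}$ causes the directional-derivative matrix $\bm{\Phi}_k$ in \eqref{grad_mat} to absorb the projection $\varphi_{ki}=(-1)^k\bm{d}_i^\top\bm{\Lambda}_k\delta\bm{x}_i$ and, for $k=1$, the angular rate $\dot{\vartheta}_{ki}$. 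Pulling this common exponential out of both terms of \eqref{N_i_matrix} leaves a residual $\tilde{\bm{H}}_{ki}$ that is a weighted Hadamard combination of rank-one outer products built from $\bm{p}_i^{(k)}$, $\bm{p}_i^{(2k)}$, and $\bm{q}_i^{(k)\circ 2}$ with respective scalar weights $\dot{\vartheta}_{ki}$, $1$, and $\varphi_{ki}$, against the PSD pattern $\bm{\Sigma}_{f_k}^\top\bm{\Sigma}_{f_k}$.

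Finally, I would lower-bound $\lambda_N(\tilde{\bm{H}}_{ki})\geq\beta_{ki}$ by case analysis on the sign pattern of the generating vectors. When $\bm{p}_i^{(n)}>\bm{0}$ componentwise for both $n\in\{k,2k\}$, Schur's theorem combined with a determinantal lower bound of the form $\lambda_N(\bm{a}\bm{a}^\top\!\circ\bm{\Sigma})\geq\det(\mathrm{diag}(\bm{a})\bm{\Sigma}\mathrm{diag}(\bm{a}))/\prod_{j<N}\lambda_j(\mathrm{diag}(\bm{a})\bm{\Sigma}\mathrm{diag}(\bm{a}))$ produces the structure $\Upsilon_k(\cdot)\det\mathrm{diag}(\bm{\varrho}_k^\downarrow\!\circ\bm{p}_i^{(n)\uparrow})$ visible in both $\mu_{ki}$ and $\nu_{ki}$, where the oppositely ordered pairing arises from the rearrangement inequality aligning the decreasing spectrum $\bm{\varrho}_k^\downarrow$ of $\bm{\Sigma}_{f_k}^\top\bm{\Sigma}_{f_k}$ with the increasingly sorted data vectors. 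The splits on $\mathrm{sgn}(\dot{\vartheta}_{ki})$ and $\mathrm{sgn}(\varphi_{ki})$ distinguish whether each rank-one perturbation enters as a favourable PSD shift (full determinant bound) or an unfavourable indefinite one, reverting to the weaker coordinate bound $\varrho_{k1}\max_n p_{in}^{(k)}$. When the componentwise positivity hypothesis fails, \eqref{beta_k_neg} tautologically enforces equality. The principal obstacle is precisely this pointwise bound: disentangling the two Hadamard-structured additive contributions in $\tilde{\bm{H}}_{ki}$, identifying the worst-case rearrangement that yields the $\bm{\varrho}_k^\downarrow\!\circ\bm{p}_i^{(n)\uparrow}$ ordering and the $\Upsilon_k$ normaliser, and treating both sign regimes uniformly so that the additive bound composes cleanly across $i$. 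Once in place, Weyl and event inclusion close the proof.
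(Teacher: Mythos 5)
Your proposal follows essentially the same route as the paper's proof: factor the common exponential $\exp(-\lVert\bm{d}_i\rVert^2_{\bm{\Lambda}_k})\bm{\Sigma}_{f_k}^\top\bm{\Sigma}_{f_k}$ out of each $\bm{N}_{ki}$ via the shared hypermetric of Assumption~\ref{iso_quad_met_class_ass}, apply iterated Weyl to $\lambda_N(\hat{\bm{H}}_k)\geq\lambda_N(\bm{H}_{k0})+\sum_i\lambda_N(\hat{\bm{H}}_{ki})$, use the Kronecker identity $\lVert\bm{d}_i\rVert^2_{\bm{\Lambda}_k}=\mathrm{vec}(\bm{\Lambda}_k)^\top(\bm{d}_i\otimes\bm{d}_i)$, and take the complement event. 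Your per-summand eigenvalue bound via the Hadamard/diagonal-congruence structure and the ordering of $\bm{\varrho}_k^{\downarrow}$ against $\bm{p}_i^{\uparrow}$ is sketched at roughly the same level of detail as the paper's own "invariance of eigenvalues w.r.t.\ similarity transforms" step, so no substantive gap relative to the published argument.
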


% PDF proof weyl_ewe, fischer
\begin{proof}
Firstly, we express the basis matrices \eqref{N_i_matrix} from Lemma \ref{quad_form_lemma} writing $\bm{\Theta}_k\equiv\tilde{\bm{\Theta}}_k(\bm{d})$ based on Assumption \ref{iso_quad_met_class_ass} and \eqref{r_knm_spec}. Proceeding analogously via $\bm{\Phi}_k\equiv\tilde{\bm{\Phi}}_k(\bm{d},\delta\bm{x}_{i})$, we follow that $$\tilde{\bm{\Phi}}_k(\bm{d}_{i},\delta\bm{x}_{i})=(-1)^k2\varphi_{ki}(\delta\bm{x}_i)\tilde{\bm{\Theta}}_k(\bm{d}_{i})$$ with the projection $\varphi_{ki}$ and the difference vector $\bm{d}_i$ as in \eqref{projection_phi_ki} and \eqref{diff_vec_d_j}, respectively. Then, we can write for each matrix summand $\bm{N}_{ki}$ in \eqref{N_i_matrix} that $$\bm{N}_{ki}=\bar{\bm{N}}_{ki} \circ \tilde{\bm{\Theta}}_{k}(|\bm{d}_i|)\, ,$$ where
\begin{equation} % N_i simplification
\bar{\bm{N}}_{ki} = \sum_{n=0}^k \!\big(2\dot{\vartheta}_{ki}(\bm{q}_i)\big)^{k-n} \delta\bm{x}_i \!\overset{(n+k)}{\bm{q}}_{\hspace{-0.3cm}i}\vspace{-0.2cm}\hspace{0.15cm}^{\top}\vspace{0.2cm} +\varphi_{ki}(\delta\bm{x}_i)\overset{(k)}{\bm{q}}_{\hspace{-0.1cm}i} \overset{(k)}{\bm{q}}_{\hspace{-0.1cm}i}\vspace{-0.2cm}\hspace{-0.05cm}^{\top}\vspace{0.2cm} \label{N_i_matrix_simp} 
\end{equation}
dropping dependencies on $\bm{q}$ for the sake of notational brevity. Consider now the $i$-th summand $\hat{\bm{H}}_{ki}=\frac{1}{2}(\bm{N}_{ki}+\bm{N}_{ki}^\top)$ from \eqref{M_est}. Exploiting $$\tilde{\bm{\Theta}}_k(|\bm{d}_{i}|) = \exp(-\lVert\bm{d}_i\rVert_{\bm{\Lambda}_k}^2) \bm{\Sigma}_{f_k}^2\, ,$$ where $\bm{\Sigma}_{f_k}^2=\bm{\Sigma}_{f_k}^\top\bm{\Sigma}_{f_k}$, and its eigenvalue decomposition $$\bm{\Sigma}_{f_k}^2=\bm{W}_k\mathrm{diag}(\bm{\varrho}_k)\bm{W}_k^\top$$ with decreasingly ordered $\bm{\varrho}_k=\bm{\lambda}^{\downarrow}(\bm{\Sigma}_{f_k}^2)>\bm{0}$, we now leverage the eigenvalue behavior of the sum of Hermitian matrices \cite{knutson2000honeycombs} in combination with Weyl's inequality \cite[Theorem~III.2.1]{bhatia_ma}. Thus, with the Hermitian summation $\hat{\bm{H}}_k=\sum_i\hat{\bm{H}}_{ki}$ as in \eqref{M_est}, we derive $$\lambda_{N}(\hat{\bm{H}}_k) \geq \sum_{i=1}^D\lambda_{N}(\hat{\bm{H}}_{ki})\, .$$ Then, utilizing the product constructions \eqref{N_i_matrix_simp}, we obtain
\begin{equation} % Bound derivation
\lambda_{N}(\hat{\bm{H}}_k) \geq \lambda_{N}(\bm{H}_{k0}) + \exp\left[-\mathrm{vec}(\bm{\Lambda}_k)^\top\!\bm{D}\right]\bm{\beta}_k \label{bound_lambda_deriv}
\end{equation}
by successively exploiting the invariance of eigenvalues w.r.t.\ similarity transforms and by using $\lVert\bm{d}_i\rVert_{\bm{\Lambda}_k}^2=(\bm{d}_i\otimes\bm{d}_i)^\top\mathrm{vec}(\bm{\Lambda}_k)$ as well as the definition of the random vector $\bm{\beta}_k \in \mathbb{R}^{D}$ as given component-wise by \eqref{beta_eq_pos}--\eqref{beta_k_neg}. Taking the complement of the set in which the lower bound \eqref{bound_lambda_deriv} is positive, we arrive at \eqref{pdf_prob_eq_theorem} and finish the proof.
\end{proof}

Note that, for a given data set $\mathcal{D}$, the upper bound on $\delta_k$ in Theorem~\ref{th1} can be used to deterministically estimate the domain in which positive definiteness is fulfilled. A probabilistic computation can be made by means of the distribution of products and quotients of continuous, nonstandardized, independent Normal random variables \cite{rv_algebra}.% via numerical intergation.

% Equilibria
\subsection{Equilibria}

Despite the point-wise inclusion of $\bm{\gamma_0}=\bm{0}$ in the distribution \eqref{gauss_dist} being stochastic, we can provide an analytical guarantee for the preservation of the equilibrium at the origin.%$\bm{q}=\bm{0}$. %d for 

% Theorem: G-Equilibrium
\begin{theorem} \label{G_eq_theorem}
The L-GP-based potential mean estimate $\hat{V}(\bm{q})\equiv\mathrm{E}[V(\bm{q})|\bm{y},\bm{\gamma_0}]$ is guaranteed to have an equilibrium:
\begin{equation}
\hat{V}(\bm{0})=0\, , \quad \nabla_{\bm{q}} \hat{V}(\bm{0})=\bm{0} \, .
\end{equation}
\end{theorem}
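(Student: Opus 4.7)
The plan is to compute $\hat{V}(\bm{q})$ and its gradient directly at $\bm{q}=\bm{0}$ by exploiting the block structure of \eqref{gauss_dist}. The conceptual crux is that $\bm{\gamma}^\top_{\bm{0}} = [V(\bm{0}),\nabla^\top V(\bm{0})]$ is the exact evaluation of $V$ and its gradient at the origin, so treating $\bm{\gamma_0}=\bm{0}$ as a noiseless observation forces the posterior to honor this constraint deterministically via the usual Gaussian-conditioning identity.

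First I would verify that the prior already satisfies $m_V(\bm{0})=0$ and $\nabla_{\bm{q}}m_V(\bm{0})=\bm{0}$. The gravitational piece is handled by Assumption~\ref{ass_potential} directly; the elastic piece is automatic since $m_U(\bm{q})=\tfrac{1}{2}\bm{q}^\top\bm{S}_0(\bm{q})\bm{q}$ vanishes to second order at the origin. Next, marginalizing \eqref{gauss_dist} over $T$ yields
\[
\hat{V}(\bm{q}) = m_V(\bm{q}) + \begin{bmatrix}\bm{k}_{\bm{0}V}^\top(\bm{q}) & \bm{k}_{\bm{y}V}^\top(\bm{q})\end{bmatrix}\bm{K}^{-1}\begin{bmatrix}\bm{0} \\ \Delta\bm{y}\end{bmatrix},
\]
where $\bm{K}$ is the $(\bm{\gamma_0};\bm{y})$-covariance block in \eqref{gauss_dist}. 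Since $V(\bm{0})$ is the first entry of $\bm{\gamma_0}$, direct inspection of the definitions gives $\bm{k}_{\bm{0}V}(\bm{0})=\bm{K}_{\bm{0}}\bm{e}_1$ and $\bm{k}_{\bm{y}V}(\bm{0})=\bm{K}_{\bm{y}\bm{0}}\bm{e}_1$, so $[\bm{k}_{\bm{0}V}^\top(\bm{0}),\bm{k}_{\bm{y}V}^\top(\bm{0})]=\bm{e}_1^\top\bm{K}$ is exactly the first row of $\bm{K}$. Multiplying by $\bm{K}^{-1}$ returns $\bm{e}_1^\top$, whose inner product with the zero-padded residual $[\bm{0};\Delta\bm{y}]$ selects its (zero) first entry, so $\hat{V}(\bm{0})=0$.

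For the gradient the argument is identical after reindexing. The Jacobian of the cross-covariance vector at $\bm{q}=\bm{0}$ picks out rows $2,\dots,N{+}1$ of $\bm{K}$, which correspond to $\nabla V(\bm{0})$ inside $\bm{\gamma_0}$, so it equals $\bm{E}^\top\bm{K}$ with $\bm{E}=[\bm{e}_2,\dots,\bm{e}_{N+1}]$. The same cancellation $\bm{E}^\top\bm{K}\cdot\bm{K}^{-1}=[\bm{E}^\top,\bm{0}]$ applied against $[\bm{0};\Delta\bm{y}]$ returns $\bm{0}$, which together with $\nabla m_V(\bm{0})=\bm{0}$ delivers $\nabla\hat{V}(\bm{0})=\bm{0}$. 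The only real hurdle is notational bookkeeping of the scalar-plus-vector block inside $\bm{\gamma_0}$ and the matching selection matrices $\bm{e}_1$ and $\bm{E}$; once the identifications are explicit, the result reduces to the orthogonality between a row of $\bm{K}$ and its inverse acting on a vector that is zero in exactly those coordinates.
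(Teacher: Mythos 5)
Your proof is correct, but it takes a genuinely different (and more general) route than the paper's. The paper works from the Schur-complemented posterior mean \eqref{cond_mean_pot_final}, first invokes Lemma~\ref{quad_form_lemma} to dispose of the elastic part by setting $U\equiv0$, and then uses the explicit metric-kernel structure of Assumption~\ref{iso_quad_met_class_ass} to compute $\bm{k}_{\bm{0}V}^\top(\bm{0})=[\sigma_G^2,\bm{0}^\top]$ and $\bm{K}_{\bm{0}}=\sigma_G^2\,\mathrm{diag}(1,\bm{\Lambda}_G)$, from which it deduces $\bm{k}_{\bm{0}V}^\top(\bm{0})\bm{K}_{\bm{0}}^{-1}=[1,\bm{0}^\top]$ and hence the cancellation \eqref{eq_V_zero}. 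Your argument observes that this last identity is nothing but the statement $\bm{k}_{\bm{0}V}(\bm{0})=\bm{K}_{\bm{0}}\bm{e}_1$, i.e., that $V(\bm{0})$ and $\nabla V(\bm{0})$ are themselves among the conditioned quantities, so the test cross-covariance at $\bm{q}=\bm{0}$ is literally a row of the joint Gram matrix and $\bm{K}^{-1}$ cancels it to a selection vector hitting the zero block of the residual $[\bm{0};\Delta\bm{y}]$. This is the standard exact-interpolation property of GP posteriors at noise-free observations, extended to derivative observations; it requires neither Assumption~\ref{iso_quad_met_class_ass} nor the $U\equiv0$ reduction via Lemma~\ref{quad_form_lemma}, and so proves the theorem for any kernel for which the joint covariance in \eqref{gauss_dist} is invertible. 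What the paper's computation buys in exchange is the explicit verification that $\bm{K}_{\bm{0}}$ is nonsingular for the chosen kernel class (your proof tacitly assumes invertibility of the full block matrix $\bm{K}$) and the intermediate identity \eqref{eq_V_zero} in closed form. Your preliminary check that $m_V(\bm{0})=0$ and $\nabla m_V(\bm{0})=\bm{0}$ is also necessary for the joint distribution \eqref{gauss_dist} to list $\bm{0}$ as the prior mean of $\bm{\gamma_0}$, and is correctly handled.
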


% Proof Equilibrium
\begin{proof}
Analogously to the derivation of \eqref{cond_mean_kin_init}, we now marginalize \eqref{gauss_dist} over $T$ instead of $V$, and condition on $\bm{y}$ as well as $\bm{\gamma_0}$. This leads to $\hat{V}(\bm{q})\equiv\mathrm{E}[V(\bm{q})|\bm{y},\bm{\gamma_0}]$ with
\begin{equation} % GP mean estimate
\hat{V}(\bm{q}) = m_V(\bm{q}) + \left( \bm{k}_{\bm{y}V}^{\top} - \bm{k}_{\bm{0}V}^\top \bm{K}_{\bm{0}}^{-1} \bm{K}_{\bm{y0}}^\top \right)\bm{K}_D^{-1}\Delta\bm{y} \, . \label{cond_mean_pot_final}
\end{equation}
Based on the quadratic form of $\hat{U}$ proven in Lemma \ref{quad_form_lemma}, we set $U\equiv0$ w.l.o.g. and obtain $V=G$. Exploiting the structure $\eqref{k_G_form}$ of the gravitational kernel $k_G$ according to Assumption \ref{iso_quad_met_class_ass} then leads to $\bm{k}_{\bm{0}V}^\top(\bm{0})=[\sigma_G^2,\bm{0}^\top]$, block-diagonal $\bm{K_0} = \sigma_G^{2} \mathrm{diag}(1, \bm{\Lambda}_G)$ and thus $\bm{k}_{\bm{0}V}^\top(\bm{0})\bm{K_0}^{-1}=[1, \bm{0}^\top]$. Therefore,
\begin{equation} % zero condition 1
\bm{k}_{\bm{y}V}^\top(\bm{0}) = \bm{k}_{\bm{0}V}^\top(\bm{0})\bm{K_0}^{-1}\bm{K_{y0}}^\top \, . \label{eq_V_zero}
\end{equation}
Plugging \eqref{eq_V_zero} into \eqref{cond_mean_pot_final} and using $m_G(\bm{0})=0$ due to Assumption~\ref{ass_potential}, we follow $\hat{V}(\bm{0})= 0$ holds. Similarly applying the $\nabla$ operator to \eqref{cond_mean_pot_final}, we compute $\nabla_{\bm{q}}\bm{k}_{\bm{y}V}^\top(\bm{0}) = \nabla_{\bm{q}}\bm{k}_{\bm{0}V}^\top(\bm{0})\bm{K_0}^{-1}\bm{K_{y0}}^\top$. Thus we can conclude that $\nabla_{\bm{q}} \hat{V}(\bm{0})=\bm{0}$ holds, again for $\nabla_{\bm{q}} m_G(\bm{0})=\bm{0}$ as in Assumption~\ref{ass_potential}. % 
\end{proof}

% Corollary Lagrange Equilibrium
\begin{corollary} \label{corollary_eq_lagrange}
The estimative Lagrangian $\hat{L}\equiv\mathrm{E}[L|\bm{y},\bm{\gamma_0}]$ based on \eqref{gauss_dist} is analytically guaranteed to have a stationary point at $\bm{q}=\dot{\bm{q}}=\bm{0}$, i.e., $\nabla_{\bm{q},\dot{\bm{q}}} \hat{L}(\bm{0},\bm{0})=\bm{0}$, with $\hat{L}(\bm{0},\bm{0})=0$. 
\end{corollary}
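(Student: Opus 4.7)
The plan is to decompose the Lagrangian estimate into its kinetic and potential components and then invoke the two results established earlier in the paper. By the linearity of conditional expectation applied to the joint Gaussian model \eqref{gauss_dist}, together with the structural identity $L = T - V$ encoded in the prior means ($m_L = m_T - m_V$) and covariances ($k_L = k_T + k_V$), the posterior mean decomposes as $\hat{L}(\bm{q},\dot{\bm{q}}) = \hat{T}(\bm{q},\dot{\bm{q}}) - \hat{V}(\bm{q})$, where $\hat{T} \equiv \mathrm{E}[T\mid\bm{y},\bm{\gamma_0}]$ and $\hat{V} \equiv \mathrm{E}[V\mid\bm{y},\bm{\gamma_0}]$.

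Next, I would handle the two components separately. For the potential part, Theorem~\ref{G_eq_theorem} directly yields $\hat{V}(\bm{0})=0$ and $\nabla_{\bm{q}}\hat{V}(\bm{0})=\bm{0}$, and trivially $\nabla_{\dot{\bm{q}}}\hat{V}\equiv\bm{0}$ since $\hat{V}$ is independent of $\dot{\bm{q}}$ by construction (the kernel $k_V$ depends only on $\bm{q}$). For the kinetic part, Lemma~\ref{quad_form_lemma} with $k=1$ guarantees the quadratic form $\hat{T}(\bm{q},\dot{\bm{q}}) = \tfrac{1}{2}\dot{\bm{q}}^\top\hat{\bm{M}}(\bm{q})\dot{\bm{q}}$. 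Setting $\dot{\bm{q}}=\bm{0}$, this identically vanishes, giving $\hat{T}(\bm{q},\bm{0})=0$ for all $\bm{q}$ and hence $\nabla_{\bm{q}}\hat{T}(\bm{q},\bm{0})=\bm{0}$. Differentiating in $\dot{\bm{q}}$ produces $\nabla_{\dot{\bm{q}}}\hat{T}(\bm{q},\dot{\bm{q}}) = \hat{\bm{M}}(\bm{q})\dot{\bm{q}}$, which also vanishes at $\dot{\bm{q}}=\bm{0}$.

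Combining these observations gives $\hat{L}(\bm{0},\bm{0}) = \hat{T}(\bm{0},\bm{0}) - \hat{V}(\bm{0}) = 0 - 0 = 0$, and
\begin{equation}
\nabla_{\bm{q}}\hat{L}(\bm{0},\bm{0}) = \nabla_{\bm{q}}\hat{T}(\bm{0},\bm{0}) - \nabla_{\bm{q}}\hat{V}(\bm{0}) = \bm{0}, \quad \nabla_{\dot{\bm{q}}}\hat{L}(\bm{0},\bm{0}) = \hat{\bm{M}}(\bm{0})\cdot\bm{0} = \bm{0}, \notag
\end{equation}
which together establish $\nabla_{\bm{q},\dot{\bm{q}}}\hat{L}(\bm{0},\bm{0})=\bm{0}$.

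There is no real obstacle in this argument; the proof is essentially a bookkeeping exercise assembling the consequences of Lemma~\ref{quad_form_lemma} and Theorem~\ref{G_eq_theorem}. The only subtle point worth making explicit is the justification of the decomposition $\hat{L} = \hat{T}-\hat{V}$ via linearity of conditional expectation in the Gaussian framework, which follows immediately once one recognizes that $L$, $T$, and $V$ are jointly Gaussian in \eqref{gauss_dist} with $L=T-V$ enforced structurally through the mean and kernel choices in Section~\ref{l_gp_sec}. Everything else reduces to evaluating the resulting closed-form quadratic expression at the origin.
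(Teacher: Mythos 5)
Your proof is correct and follows essentially the same route as the paper's: decomposing $\hat{L}$ into its kinetic and potential posterior means, then invoking the quadratic form from Lemma~\ref{quad_form_lemma} for $\hat{T}$ and the equilibrium guarantee from Theorem~\ref{G_eq_theorem} for $\hat{V}$. Your version merely spells out the evaluation at the origin more explicitly than the paper's one-line argument.
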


% Proof Lagrange Equilibrium
\begin{proof}
Since $\hat{L}=\hat{T}-\hat{G}-\hat{U}$, this directly follows from the preservation of quadratic kinetic and elastic forms proven in Lemma \ref{quad_form_lemma} combined with the guaranteed gravitational equilibrium from Theorem \ref{G_eq_theorem}, i.e., $\hat{G}(\bm{0})=0$.
\end{proof}

% Energy Conservation
\subsection{Energy Conservation}

Lastly, the L-GP is also equivalent to a dynamic system. We investigate this intuition in the following.

% Proposition 2: Torque GP posterior mean
\begin{proposition}
Upon explicit inclusion of a test torque $\bm{\tau}(\bm{q},\dot{\bm{q}},\ddot{\bm{q}})$ in the form of \eqref{tau_vec_l_gp} in \eqref{gauss_dist}, the conditional expectation $\hat{\bm{\tau}}(\bm{q},\dot{\bm{q}},\ddot{\bm{q}})\equiv\mathrm{E}[\bm{\tau}(\bm{q},\dot{\bm{q}},\ddot{\bm{q}})|\bm{y},\bm{\gamma_0}]$ is given by 
\begin{equation}
\hat{\bm{\tau}}(\bm{q},\dot{\bm{q}},\ddot{\bm{q}}) = \hat{\bm{M}}(\bm{q})\ddot{\bm{q}} + \hat{\bm{C}}(\bm{q},\dot{\bm{q}})\dot{\bm{q}} + \hat{\bm{g}}(\bm{q}) \label{torque_gp_mean_est}
\end{equation}
with $\hat{\bm{C}}=\frac{1}{2}[\nabla_{\dot{\bm{q}}}\nabla_{\bm{q}}^\top \hat{T} + \dot{\hat{\bm{M}}} - \nabla_{\bm{q}}\nabla_{\dot{\bm{q}}}^\top \hat{T}]$, $\hat{\bm{M}}$ from \eqref{M_est} for $k=1$, and $\hat{\bm{g}}=\nabla_{\bm{q}}\hat{V}$ from \eqref{cond_mean_pot_final} and \eqref{M_est} for $k=0$. %with with $\dot{\hat{\bm{M}}} = \sum_{k=1}^{N}\dot{q}_k\frac{\partial}{\partial q_k}\hat{\bm{M}}$,
\end{proposition}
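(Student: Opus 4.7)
The plan is to exploit the linearity of the Lagrangian vector operator $\bm{\mathcal{L}}_{\bm{q}}$ together with the algebraic identities that the chain-rule expansion in \eqref{lagrange_op} relies on, and then to invoke the quadratic form of the kinetic posterior mean guaranteed by Lemma~\ref{quad_form_lemma}. First, I would argue that conditioning commutes with the linear Lagrangian operator: since in the augmented joint GP \eqref{gauss_dist} the test torque is constructed by applying $\bm{\mathcal{L}}_{\bm{q}}$ to the composite Lagrangian GP $L = T - V$, the posterior covariances of $\bm{\tau}$ with $(\bm{y},\bm{\gamma_0})$ are precisely $\bm{\mathcal{L}}_{\bm{q}}$ acting on those of $L$. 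Hence
\begin{equation*}
\hat{\bm{\tau}}(\bm{q},\dot{\bm{q}},\ddot{\bm{q}}) = \bm{\mathcal{L}}_{\bm{q}}\hat{L}(\bm{q},\dot{\bm{q}}) = \bm{\mathcal{L}}_{\bm{q}}\bigl(\hat{T}(\bm{q},\dot{\bm{q}}) - \hat{V}(\bm{q})\bigr),
\end{equation*}
reducing the problem to reproducing the chain-rule expansion \eqref{lagrange_op} on the posterior means rather than on the true $L$.

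Next, I would invoke Lemma~\ref{quad_form_lemma} with $k=1$, which guarantees that $\hat{T}(\bm{q},\dot{\bm{q}}) = \tfrac{1}{2}\dot{\bm{q}}^\top\hat{\bm{M}}(\bm{q})\dot{\bm{q}}$ with $\hat{\bm{M}}$ from \eqref{M_est}. From this explicit quadratic form the three key identities used to derive \eqref{M_C_g_system} carry over verbatim to the posterior: $\nabla_{\dot{\bm{q}}}\nabla_{\dot{\bm{q}}}^\top\hat{T}=\hat{\bm{M}}$, $\nabla_{\bm{q}}\hat{T} = \tfrac{1}{2}(\nabla_{\bm{q}}\nabla_{\dot{\bm{q}}}^\top\hat{T})\dot{\bm{q}}$, and $(\nabla_{\dot{\bm{q}}}\nabla_{\bm{q}}^\top\hat{T})\dot{\bm{q}} = \dot{\hat{\bm{M}}}\dot{\bm{q}}$. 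For the potential part, $\hat{V}(\bm{q})$ depends only on $\bm{q}$ (both the marginal covariance $k_V$ and its structural terms in \eqref{cond_mean_pot_final} are independent of $\dot{\bm{q}}$), so $\nabla_{\dot{\bm{q}}}\hat{V}=\bm{0}$ and $\nabla_{\bm{q}}\hat{V}=\hat{\bm{g}}$.

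Substituting these into the expanded operator $\bm{\mathcal{L}}_{\bm{q}} = (\nabla_{\dot{\bm{q}}}^\top\ddot{\bm{q}}+\nabla_{\bm{q}}^\top\dot{\bm{q}})\nabla_{\dot{\bm{q}}} - \nabla_{\bm{q}}$ applied to $\hat{T}-\hat{V}$ yields the inertia term $\hat{\bm{M}}\ddot{\bm{q}}$, the gravitational term $\hat{\bm{g}}$, and a residual velocity-quadratic term that, after adding and subtracting $\tfrac{1}{2}(\nabla_{\bm{q}}\nabla_{\dot{\bm{q}}}^\top\hat{T})\dot{\bm{q}}$ as in the derivation of \eqref{M_C_g_system}, regroups into exactly $\hat{\bm{C}}\dot{\bm{q}}$ with $\hat{\bm{C}} = \tfrac{1}{2}[\nabla_{\dot{\bm{q}}}\nabla_{\bm{q}}^\top\hat{T} + \dot{\hat{\bm{M}}} - \nabla_{\bm{q}}\nabla_{\dot{\bm{q}}}^\top\hat{T}]$, producing \eqref{torque_gp_mean_est}.

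The main obstacle is the first step: rigorously justifying $\hat{\bm{\tau}} = \bm{\mathcal{L}}_{\bm{q}}\hat{L}$. Because $\bm{\mathcal{L}}_{\bm{q}}$ mixes $\bm{q}$-gradients with $\dot{\bm{q}}$-gradients and multiplies by the test regressors $\dot{\bm{q}},\ddot{\bm{q}}$ via \eqref{lagrange_op}, one has to be careful to (i) treat $(\dot{\bm{q}},\ddot{\bm{q}})$ as deterministic conditioning variables rather than additional random inputs, and (ii) verify that the cross-covariance blocks of $\bm{\tau}$ with $\bm{y}$ and $\bm{\gamma_0}$ used to form $\hat{\bm{\tau}}$ are indeed obtained by left-applying $\bm{\mathcal{L}}_{\bm{q}}$ to the corresponding covariance blocks of $L$, so that the posterior mean formula \eqref{post_gp_mean} commutes with $\bm{\mathcal{L}}_{\bm{q}}$. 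Once this commutativity is invoked, the remainder is an algebraic repetition of the derivation of \eqref{M_C_g_system} with $(T,V,\bm{M},\bm{g})$ replaced by their hatted posterior counterparts.
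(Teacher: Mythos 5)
Your proposal is correct and follows essentially the same route as the paper: both rest on the commutativity of the linear Lagrangian-differential operator $\bm{\mathcal{L}}_{\bm{q}}$ with GP conditioning (so that $\hat{\bm{\tau}}=\bm{\mathcal{L}}_{\bm{q}}\hat{L}$), followed by the same chain-rule regrouping that converts \eqref{lagrange_nabla_eqs_of_mot} into \eqref{M_C_g_system}, applied to the hatted posterior quantities via the quadratic form from Lemma~\ref{quad_form_lemma}. Your version is more explicit about the intermediate identities and the justification of the commutation step, but the underlying argument is the one the paper gives.
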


\begin{proof}
Follows as a direct consequence of the kernel construction via the Lagrangian-differential operator \eqref{lagrange_op} and the equivalent representation of \eqref{lagrange_nabla_eqs_of_mot} by \eqref{M_C_g_system}. Thus, we include \eqref{tau_vec_l_gp} in \eqref{gauss_dist} and get $$\mathrm{E}[\bm{\tau}(\bm{q},\dot{\bm{q}},\ddot{\bm{q}})|\bm{y},\bm{\gamma_0}]\equiv\bm{\mathcal{L}}_{\bm{q},\dot{\bm{q}}}\hat{L}(\bm{q},\dot{\bm{q}})=\hat{\bm{\tau}}(\bm{q},\dot{\bm{q}},\ddot{\bm{q}})$$ after marginalization and conditioning, proving the result.
\end{proof}
%The mean torque L-GP \eqref{torque_gp_mean_est} is in the same dynamic form as \eqref{M_C_g_system}.

We now show that the equivalent L-GP dynamics \eqref{torque_gp_mean_est} preserve the physical property of energy conservation as a last result. For notational consistency, we use the (estimative) state vector $\hat{\bm{x}}^\top=[\bm{q}^\top,\dot{\bm{q}}^\top]$  in the following. %can be equivalently obtained by the application the Lagrangian-differential operator $\bm{\mathcal{L}}_{\bm{q},\dot{\bm{q}}}(\dot{\bm{q}},\ddot{\bm{q}})$ from \eqref{lagrange_op} to $\hat{L}(\bm{q},\dot{\bm{q}})=\hat{T}(\bm{q},\dot{\bm{q}})-\hat{V}(\bm{q})$. 

% Corrolary 2: Passive & Lossless Torque-GP
\begin{corollary}\label{passivity_gp_corollary}
Consider the system $\dot{\hat{\bm{x}}}=\hat{\bm{\phi}}(\hat{\bm{x}},\hat{\bm{u}})$ defined $\forall \hat{\bm{x}}\in\bm{\Xi}$ in the compact domain $\bm{\Xi}\subseteq\bm{\mathcal{X}}\subseteq\mathbb{R}^{2N}$ with
\begin{equation} % local domain
\bm{\Xi} = \big\{\bm{0}\big\} \cup \big\{ \hat{\bm{M}}(\hat{\bm{x}}_1) \succ \bm{0} \land \hat{V}(\hat{\bm{x}}_1) \geq -1/2\lambda_{N}(\hat{\bm{M}}) \lVert\hat{\bm{x}}_2\rVert^2 \big\} \nonumber%\label{loc_passive_domain}
\end{equation}
spanned $\forall \hat{\bm{u}}\in\mathbb{R}^N$ by the GP \eqref{tau_vec_l_gp} with estimate \eqref{torque_gp_mean_est}, where
\begin{equation} % local system
\hat{\bm{\phi}}(\hat{\bm{x}},\hat{\bm{u}}) \!=\! \begin{bmatrix} \hat{\bm{x}}_2 \\ \hat{\bm{M}}^{-1}(\hat{\bm{x}}_1)\big(\hat{\bm{u}}-\hat{C}(\hat{\bm{x}}_1,\hat{\bm{x}}_2)\hat{\bm{x}}_2-\hat{\bm{g}}(\hat{\bm{x}}_1)\big) \end{bmatrix} \label{passive_dyn_sys} .
\end{equation}
The dynamic system \eqref{passive_dyn_sys} with state vector $\hat{\bm{x}}^\top=[\hat{\bm{x}}_1^\top,\hat{\bm{x}}_2^\top]$ and input $\hat{\bm{u}}=\hat{\bm{\tau}}$ is passive and, moreover, lossless, with respect to the energy storage function $\hat{E}=\hat{T}+\hat{V}$, where
\begin{equation} % Energy storage function
\hat{E}(\hat{\bm{x}}) =  1/2\hat{\bm{x}}_2^\top\hat{\bm{M}}(\hat{\bm{x}}_1)\hat{\bm{x}}_2 + \hat{V}(\hat{\bm{x}}_1) \, , \label{energy_storage_function}
\end{equation}
based on \eqref{quad_kinetic_mean}, \eqref{cond_mean_pot_final}, and the system output mapping $\hat{\bm{y}}=\hat{\bm{x}}_2$.
\end{corollary}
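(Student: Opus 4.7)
The plan is to mimic the classical passivity argument for Euler--Lagrange systems, applied to the estimated quantities $\hat{\bm{M}}$, $\hat{\bm{C}}$ and $\hat{V}$, and to verify that $\hat{E}$ qualifies as a valid storage function on the prescribed domain $\bm{\Xi}$.

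First, I would establish that $\hat{E}\colon\bm{\Xi}\to\mathbb{R}_{\geq 0}$ is a legitimate storage function. Non-negativity follows directly from the definition of $\bm{\Xi}$: on that set $\hat{\bm{M}}\succ\bm{0}$, so $\tfrac{1}{2}\hat{\bm{x}}_2^\top\hat{\bm{M}}\hat{\bm{x}}_2\geq\tfrac{1}{2}\lambda_N(\hat{\bm{M}})\lVert\hat{\bm{x}}_2\rVert^2$, which combined with the lower bound on $\hat{V}$ yields $\hat{E}(\hat{\bm{x}})\geq 0$. Moreover $\hat{E}(\bm{0})=0$, since by Lemma~\ref{quad_form_lemma} the quadratic estimate $\hat{T}$ vanishes at $\dot{\bm{q}}=\bm{0}$ and by Theorem~\ref{G_eq_theorem} the potential estimate $\hat{V}$ satisfies $\hat{V}(\bm{0})=0$.

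Next, I would differentiate $\hat{E}$ along the trajectories of \eqref{passive_dyn_sys}. Using $\dot{\hat{\bm{x}}}_1=\hat{\bm{x}}_2$ and $\hat{\bm{g}}=\nabla_{\bm{q}}\hat{V}$, a direct computation gives
\begin{equation}
\dot{\hat{E}} = \hat{\bm{x}}_2^\top\hat{\bm{M}}\dot{\hat{\bm{x}}}_2 + \tfrac{1}{2}\hat{\bm{x}}_2^\top\dot{\hat{\bm{M}}}\hat{\bm{x}}_2 + \hat{\bm{g}}^\top\hat{\bm{x}}_2. \nonumber
\end{equation}
Substituting $\hat{\bm{M}}\dot{\hat{\bm{x}}}_2 = \hat{\bm{u}} - \hat{\bm{C}}\hat{\bm{x}}_2 - \hat{\bm{g}}$ from the second block of \eqref{passive_dyn_sys} cancels the $\hat{\bm{g}}^\top\hat{\bm{x}}_2$ terms and produces
\begin{equation}
\dot{\hat{E}} = \hat{\bm{x}}_2^\top\hat{\bm{u}} + \tfrac{1}{2}\hat{\bm{x}}_2^\top\bigl(\dot{\hat{\bm{M}}}-2\hat{\bm{C}}\bigr)\hat{\bm{x}}_2. \nonumber
\end{equation}

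The key step is then to exploit the fact that $\hat{\bm{C}}$ is built exactly by the Christoffel-symbol construction recalled below \eqref{M_C_g_system}, but applied to the estimated kinetic energy $\hat{T}$. Since $\hat{T}$ is itself a bona fide quadratic form in $\dot{\bm{q}}$ with matrix $\hat{\bm{M}}(\bm{q})$ (Lemma~\ref{quad_form_lemma}), the same chain-rule identities that make $\dot{\bm{M}}-2\bm{C}$ skew-symmetric in \eqref{M_C_g_system} yield the estimated analogue $\dot{\hat{\bm{M}}}-2\hat{\bm{C}}=-\bigl(\dot{\hat{\bm{M}}}-2\hat{\bm{C}}\bigr)^\top$. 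Hence the quadratic form vanishes and $\dot{\hat{E}}=\hat{\bm{x}}_2^\top\hat{\bm{u}}=\hat{\bm{y}}^\top\hat{\bm{u}}$, which is losslessness (and in particular passivity) with respect to the supply rate $\hat{\bm{y}}^\top\hat{\bm{u}}$ and storage function $\hat{E}$.

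The only delicate point is justifying the skew-symmetry identity for the \emph{estimated} matrices, because $\hat{\bm{M}}$ comes from Lemma~\ref{quad_form_lemma} and $\hat{\bm{C}}$ is defined in the proposition through the same differential operator applied to $\hat{T}$. I would make this explicit by observing that $\hat{T}(\bm{q},\dot{\bm{q}})=\tfrac12\dot{\bm{q}}^\top\hat{\bm{M}}(\bm{q})\dot{\bm{q}}$ satisfies, componentwise, $\nabla_{\dot{\bm{q}}}\nabla_{\dot{\bm{q}}}^\top\hat{T}=\hat{\bm{M}}$ and $\bigl(\nabla_{\dot{\bm{q}}}\nabla_{\bm{q}}^\top\hat{T}\bigr)\dot{\bm{q}}=\dot{\hat{\bm{M}}}\dot{\bm{q}}$, which is the single ingredient needed to rerun, verbatim, the calculation that yields the classical skew-symmetry of $\dot{\bm{M}}-2\bm{C}$. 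This is really the only place where the quadratic structure preserved by the L-GP is exploited in a non-trivial way; all other steps are routine.
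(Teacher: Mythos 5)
Your proposal is correct and follows essentially the same route as the paper: non-negativity of $\hat{E}$ on $\bm{\Xi}$ from the defining inequality together with $\hat{E}(\bm{0})=0$, then the classical energy-rate computation $\dot{\hat{E}}=\hat{\bm{y}}^\top\hat{\bm{u}}$ using the skew-symmetry of $\dot{\hat{\bm{M}}}-2\hat{\bm{C}}$ (equivalently, $\dot{\hat{\bm{M}}}=\hat{\bm{C}}+\hat{\bm{C}}^\top$) inherited from the Christoffel construction applied to the quadratic estimate $\hat{T}$. Your explicit justification that the skew-symmetry identity carries over to the estimated matrices via Lemma~\ref{quad_form_lemma} is in fact slightly more careful than the paper's, which asserts it; the only points you omit are the brief remarks on local Lipschitz continuity of $\hat{\bm{\phi}}$ on $\bm{\Xi}\times\mathbb{R}^N$ and $\hat{\bm{\phi}}(\bm{0},\bm{0})=\bm{0}$ needed for the formal passivity definition.
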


% Proof Corollary 2
\begin{proof} From the definition of $\bm{\Xi}$, we can follow that \eqref{passive_dyn_sys} is locally Lipschitz in the domain $\bm{\Xi}\times\mathbb{R}^N$. Also, the storage function \eqref{energy_storage_function} is positive semidefinite, since $\hat{E}(\bm{0})=0$ due to Corollary \ref{corollary_eq_lagrange} and $\lambda_{N}(\hat{\bm{M}})\lVert\hat{\bm{x}}_2\rVert^2\leq\hat{\bm{x}}_2^\top\hat{\bm{M}}\hat{\bm{x}}_2$ for $\hat{\bm{M}} \succ \bm{0}$ ensures $\hat{E}(\hat{\bm{x}})\geq0$, $\forall \bm{x}\in\bm{\Xi}$. The output function $\hat{\bm{y}}=\hat{\bm{x}}_2$ is continuous and for the dynamics $\hat{\bm{\phi}}(\bm{0},\bm{0})=\bm{0}$ holds. Taking the time derivative of \eqref{energy_storage_function} along the trajectories \eqref{passive_dyn_sys}, we obtain the power fed into the system by input $\hat{\bm{u}}$ computing
\begin{align}
\dot{\hat{E}} &= \hat{\bm{x}}_2^\top\left( \hat{\bm{M}}(\hat{\bm{x}}_1)\dot{\hat{\bm{x}}}_2 + \frac{1}{2}\dot{\hat{\bm{M}}}(\hat{\bm{x}}_1)\hat{\bm{x}}_2 + \nabla_{\hat{\bm{x}}_1}\hat{V}(\hat{\bm{x}}_1) \right) \nonumber \\
&= \hat{\bm{x}}_2^\top\left( \hat{\bm{M}}(\hat{\bm{x}}_1)\dot{\hat{\bm{x}}}_2 + \hat{C}(\hat{\bm{x}}_1,\hat{\bm{x}}_2)\hat{\bm{x}}_2 + \hat{\bm{g}}(\hat{\bm{x}}_1) \right)  \nonumber \\
&= \hat{\bm{y}}^\top\hat{\bm{u}} \nonumber \, ,
\end{align}
where we have exploited the symmetry of $\hat{\bm{M}}=\hat{\bm{M}}^\top$, the identity $\dot{\hat{\bm{M}}}=\hat{\bm{C}}+\hat{\bm{C}}^\top$ and the skew-symmetry of $\dot{\hat{\bm{M}}} - 2\hat{\bm{C}}^\top$. Thus, according to \cite{khalil}, the system \eqref{passive_dyn_sys} is lossless.%, since $\dot{\hat{E}}\leq\bm{y}^\top\bm{u}$ and $\dot{\hat{E}}=\bm{y}^\top\bm{u}$, respectively.
\end{proof}

Note that from the passivity of system \eqref{passive_dyn_sys} we can also directly follow stability of the origin of the unforced system $\dot{\hat{\bm{x}}}=\hat{\bm{\phi}}(\hat{\bm{x}},\bm{0})$ by taking \eqref{energy_storage_function} as a Lyapunov function \cite{khalil}.

%%%%%%%%%%%%%%%%%%%%%%%%%%%%%%%%%%%%%%%%%%%%%%%%%%%%%%%%%%%%%%%%%%%%%%%%%%%%%%%%
\section{NUMERICAL ILLUSTRATIONS} \label{sim_sec}

\addtolength{\textheight}{-1.28725cm}%5725cm}   % This command serves to balance the column lengths
% on the last page of the document manually. It shortens
% the textheight of the last page by a suitable amount.
% This command does not take effect until the next page
% so it should come on the page before the last. Make
% sure that you do not shorten the textheight too much.

In this section, we now demonstrate the efficacy of our approach and validate the derived theoretical results, choosing a simple example for the sake of easy interpretation. %evaluate the effectiveness of our L-GP approach and validate the derived theoretical results.

\subsection{Setup}

We benchmark our method on the two-link robotic manipulator from \cite[p.~164]{murray_amitrm}. Gravity $g=10$ ms$^{-2}$ acts along the positive $x$-axis, leading to an equilibrium at the origin $\bm{q}=\bm{0}$. The links have unit masses $m_n=1$~kg and unit lengths $l_n=1$~m for $n\in\{1,2\}$. Estimates are available but erroneous: $\hat{m}_n=(1+\Delta_n)m_n$, $\hat{l}_n=(1+\Delta_n)l_n$, where $\Delta_n=(-1)^n/2$.

For all of the numerical simulations, we use only 25 training pairs equally distanced on the domain $\bm{q}\in[-1,1]^2$ with fixed $\dot{q}_n=(-1)^n$ and $\ddot{\bm{q}}=\bm{1}$. The torque measurement noise has covariance $\bm{\Sigma}_{\bm{\varepsilon}_i}=\sigma_{\bm{\varepsilon}}^2\bm{I}$, $\sigma_{\bm{\varepsilon}}=0.1$ Nm, while the differential process noise corrupting the accelerations has covariance $\bm{\Sigma}_{\bm{\alpha}_i}=\sigma_{\alpha}^2\bm{I}$, $\sigma_{\alpha}=\pi/180$ $\text{rad}/\text{s}^{2}$. Positions and velocities are kept noisefree, enabling explicit numerical integration with ode45. We reduce the kinetic mass inertia hypermetrics $\bm{\Lambda}_{knm}=\bm{\Lambda}_{k}$, $\forall n,m\in\{1,2\}$, to the constant Euclidian form $\bm{\Lambda}_{k}^{-1}=\sigma_{\bm{d}_{T}}^{2}\bm{I}$, where $k=1$, and fix $\sigma_{\bm{d}_{T}}=10^{2}$. The same is done for the gravitational distance covariance keeping $\bm{\Sigma}_{\bm{d}_G}^{1/2}=\mathrm{diag}([1.6,2.7])$. The remaining radial gravitational hypervariance $\sigma_G^2$ along with the Cholesky-kinetic upper-triangular hypercovariance $\bm{\Sigma}_{f_k}$, where again $k=1$, are optimized via the log-likelihood.

\subsection{Results}

As a first experiment, we demonstrate the closed-loop applicability of the L-GP model. For this, we compare a standard PD tracking controller \cite[p.~194]{murray_amitrm} with its L-GP-based version as shown in Fig.~\ref{fig_1_tracking_control}. Starting from the equilibrium, the controllers with gains $\bm{K}_p=\bm{K}_d=10\bm{I}$ have the task of following a sinusoidal reference trajectory $\bm{q}_d(t)=a_{d}\sin t\bm{1}$ with amplitude $a_d=\pi/2$. The simulation shows that the L-GP-based version considerably improves the accuracy, demonstrating reliable performance despite the suboptimal distribution of the data points, independent from the reference trajectory, requiring substantial extrapolation.

\begin{figure}[] % PD closed-loop control experiment
\centerline{\includegraphics[width=\columnwidth]{./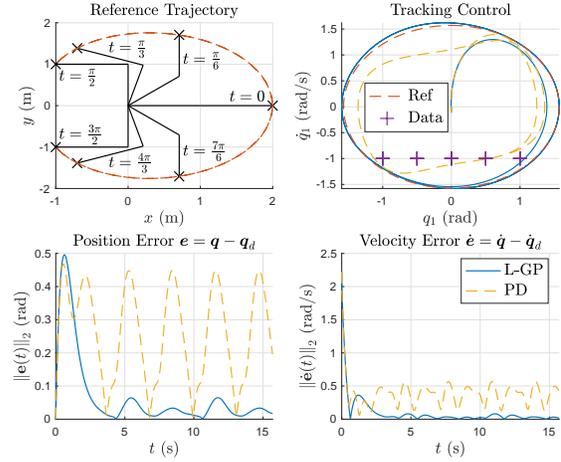}}
\caption{Tracking performance of the standard PD controller (dashed yellow line) and its L-GP augmented version (solid blue line). The top left plot shows the reference trajectory (dashed red line) w.r.t.\ the end-effector in its cartesian work space, the top right the closed-loop behavior in the state space of the first joint along with the training data points (purple crosses). The bottom two subfigures indicate the Euclidian norms of position and velocity tracking errors over time.}
\label{fig_1_tracking_control}
\end{figure}

\begin{figure}[] % Eigenmanifolds
\centerline{\includegraphics[width=\columnwidth]{./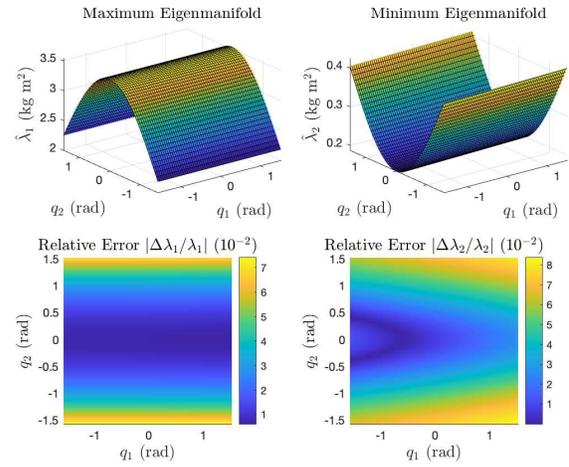}}
\caption{Eigenmanifolds of the generalized inertia matrix of the two-link robot over the domain $\bm{q}\in[-\pi/2,\pi/2]^2$. The top left and right surface plots show the maximum and minimum estimative eigenvalue manifolds of the L-GP-based estimate $\hat{\bm{M}}(\bm{q})$, respectively, while the bottom two subfigures are heat maps indicating the relative approximation errors in percent.}
\label{fig_2_eigenmanifolds}
\end{figure} 

Next, we validate the physical consistency of the L-GP. Therefore, using the same parametrization as in the previous experiment, we evaluate the mass inertia estimate over the joint domain $\bm{q}\in[-a_d, a_d]^2$ and investigate its eigenmanifolds, as shown in Fig.~\ref{fig_2_eigenmanifolds}. Clearly, the L-GP accurately approximates the positive definite function space despite only being a subcomponent of the input-output relation. The consistency of the gravitational potential estimate with equilibrium is validated in Fig.~\ref{fig_3_energy}, along with the conservatism of the equivalent L-GP-based dynamics simulated for different initial conditions $\hat{\bm{x}}(0)=[a_0\bm{1}^\top,\bm{0}^\top]^\top$ with displacement amplitudes $a_0=0.1$ (red line), $a_0=0.5$ (yellow line) and $a_0=1$ (purple line). The signed potential and energy approximation errors show their properties of local positive definiteness and passivity, respectively.

\begin{figure}[] % Energy conservation
\centerline{\includegraphics[width=\columnwidth]{./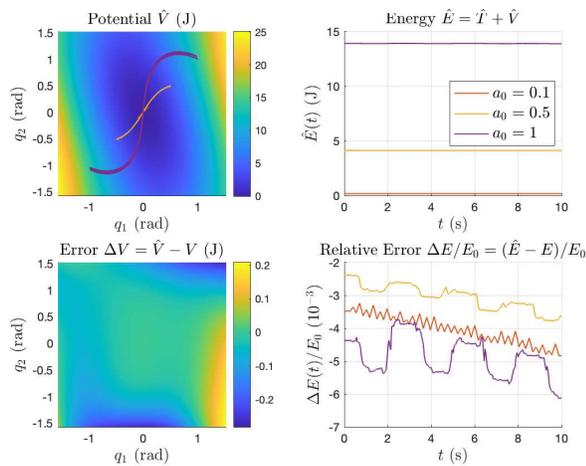}}
\caption{Potential energy estimate and free trajectories of the L-GP equivalent dynamic system (top left plot) with initial conditions $\bm{q}(0)=a_0\bm{1}$ and $\dot{\bm{q}}(0)=\bm{0}$ for $a_0=0.1$ (red line), $a_0=0.5$ (yellow line) and $a_0=1$ (purple line). The constant time evolution of the energy estimates are shown in the top right plot for the same trajectories. The bottom left plot visualizes the signed potential approximation error, the bottom right one the signed relative energy error in permille.}
\label{fig_3_energy}
\end{figure}

%%%%%%%%%%%%%%%%%%%%%%%%%%%%%%%%%%%%%%%%%%%%%%%%%%%%%%%%%%%%%%%%%%%%%%%%%%%%%%%%
\section{CONCLUSION}\label{conclus_sec}

This paper presented an approach for the identification of uncertain Lagrangian systems exploiting prior physical knowledge by kernel construction. Physical consistency of the data-driven method in terms of guarantuees for the fulfillment of certain properties was proven rigorously and validated in numerical simulation, along with the effective applicability of the model to an exemplary tracking control problem. Future research will focus on the extension of the approach to handle dissipative and time-variant systems. Another promising direction is the application of the Lagrangian variance estimate for uncertainty quantification. % is beyond the scope of this paper, but possible, either by splitting the external forces $\bm{\tau}$ up into the sum of input $\bm{u}$ and disturbance $\bm{d}$, i.e., writing $\bm{\tau} = \bm{u} + \bm{d}$, or by introducing time $t$ as an additional input of the Lagrangian $L(\bm{q},\dot{\bm{q}},t)$.

%%%%%%%%%%%%%%%%%%%%%%%%%%%%%%%%%%%%%%%%%%%%%%%%%%%%%%%%%%%%%%%%%%%%%%%%%%%%%%%%
\section{ACKNOWLEDGMENTS}

This work was supported by the Consolidator Grant "Safe data-driven control for human-centric systems" (CO-MAN) of the European Research Council (ERC) under grant agreement ID 864686, and by the Horizon 2020 Research and Innovation Action project "Rehabilitation based on Hybrid neuroprosthesis" (ReHyb) of the European Union (EU) under grant agreement number 871767. The authors gratefully acknowledge the thoughtful comments from L. Evangelisti, Institute of System Dynamics and Control, German Aerospace Center (DLR), and the fruitful discussions with A. Lederer, Chair of Information-oriented Control, Technical University of Munich.
%
%%%%%%%%%%%%%%%%%%%%%%%%%%%%%%%%%%%%%%%%%%%%%%%%%%%%%%%%%%%%%%%%%%%%%%%%%%%%%%%%

\bibliographystyle{ieeetr}
\bibliography{mybibfile}

\begin{thebibliography}{10}

\bibitem{perf_guaraentees_gp}
A.~Gahlawat, P.~Zhao, A.~Patterson, N.~Hovakimyan, and E.~Theodorou, ``L1-gp:
  L1 adaptive control with bayesian learning,'' in {\em Proceedings of the 2nd
  Conference on Learning for Dynamics and Control}, vol.~120 of {\em
  Proceedings of Machine Learning Research}, pp.~826--837, PMLR, 10--11 Jun
  2020.

\bibitem{berkenkamp}
F.~Berkenkamp, R.~Moriconi, A.~P. Schoellig, and A.~Krause, ``Safe learning of
  regions of attraction for uncertain, nonlinear systems with gaussian
  processes,'' in {\em 2016 IEEE 55th Conference on Decision and Control
  (CDC)}, pp.~4661--4666, 2016.

\bibitem{data_eff_gps}
M.~P. Deisenroth, D.~Fox, and C.~E. Rasmussen, ``Gaussian processes for
  data-efficient learning in robotics and control,'' {\em IEEE Transactions on
  Pattern Analysis and Machine Intelligence}, vol.~37, no.~2, pp.~408--423,
  2015.

\bibitem{sys_id_ljung}
L.~Ljung, {\em System Identification: Theory for the User}.
\newblock PTR Prentice Hall Information and System Sciences, Pearson, 2nd~ed.,
  1998.

\bibitem{param_id_leboutet}
Q.~Leboutet, J.~Roux, A.~Janot, J.~R. Guadarrama-Olvera, and G.~Cheng,
  ``Inertial parameter identification in robotics: A survey,'' {\em Applied
  Sciences}, vol.~11, no.~9, 2021.

\bibitem{soft_robotics_com}
A.~A. Amiri~Moghadam, K.~Torabi, A.~Kaynak, M.~N.~H. Zainal~Alam, A.~Kouzani,
  and B.~Mosadegh, ``Control-oriented modeling of a polymeric soft robot,''
  {\em Soft Robotics}, vol.~3, no.~2, pp.~82--97, 2016.

\bibitem{day_couplings_air}
R.~E. Day, ``Coupling dynamics in aircraft: A historical perspective,'' special
  publication, NASA Dryden Flight Research Center, Edwards, CA, 1997.

\bibitem{hydro}
T.~I. Fossen and O.-E. Fjellstad, ``Nonlinear modelling of marine vehicles in 6
  degrees of freedom,'' {\em Mathematical Modelling of Systems}, vol.~1, no.~1,
  pp.~17--27, 1995.

\bibitem{symp_GPR_ham_sys}
K.~Rath, C.~G. Albert, B.~Bischl, and U.~von Toussaint, ``Symplectic gaussian
  process regression of maps in hamiltonian systems,'' {\em Chaos: An
  Interdisciplinary Journal of Nonlinear Science}, vol.~31, no.~5, p.~053121,
  2021.

\bibitem{cheng_lagrangian_rkhs}
C.-A. Cheng and H.-P. Huang, ``Learn the lagrangian: A vector-valued rkhs
  approach to identifying lagrangian systems,'' {\em IEEE Transactions on
  Cybernetics}, vol.~46, no.~12, pp.~3247--3258, 2016.

\bibitem{thomas_tracking_control_of_el_systems}
T.~Beckers, D.~Kuli{\'c}, and S.~Hirche, ``Stable gaussian process based
  tracking control of euler--lagrange systems,'' {\em Automatica}, vol.~103,
  pp.~390--397, 2019.

\bibitem{murray_amitrm}
R.~M. Murray, Z.~Li, and S.~S. Sastry, {\em A Mathematical Introduction to
  Robotic Manipulation}.
\newblock CRC Press, 1994.

\bibitem{nips_gp_input_noise}
A.~Mchutchon and C.~Rasmussen, ``Gaussian process training with input noise,''
  in {\em Advances in Neural Information Processing Systems}, vol.~24, Curran
  Associates, Inc., 2011.

\bibitem{johnson_input_noise_gp}
J.~E. Johnson, V.~Laparra, and G.~Camps-Valls, ``Accounting for input noise in
  gaussian process parameter retrieval,'' {\em IEEE Geoscience and Remote
  Sensing Letters}, vol.~17, no.~3, pp.~391--395, 2020.

\bibitem{rasmussen_gp_mit_book}
C.~E. Rasmussen and C.~K.~I. Williams, {\em Gaussian Processes for Machine
  Learning}.
\newblock The MIT Press, 2006.

\bibitem{raissi_gp_diff_eqs}
M.~Raissi, P.~Perdikaris, and G.~E. Karniadakis, ``Inferring solutions of
  differential equations using noisy multi-fidelity data,'' {\em Journal of
  Computational Physics}, vol.~335, pp.~736--746, 2017.

\bibitem{nips_deriv_obs_solak}
E.~Solak, R.~Murray-smith, W.~Leithead, D.~Leith, and C.~Rasmussen,
  ``Derivative observations in gaussian process models of dynamic systems,'' in
  {\em Advances in Neural Information Processing Systems}, vol.~15, MIT Press,
  2002.

\bibitem{md_gps_alvarez}
M.~A. {\'A}lvarez, L.~Rosasco, and N.~D. Lawrence, ``Kernels for vector-valued
  functions: A review,'' {\em Foundations and Trends{\textregistered} in
  Machine Learning}, vol.~4, no.~3, pp.~195--266, 2012.

\bibitem{kf_gp_rel}
S.~Reece and S.~Roberts, ``An introduction to gaussian processes for the kalman
  filter expert,'' in {\em 2010 13th International Conference on Information
  Fusion}, pp.~1--9, 2010.

\bibitem{lovlelock_metrics}
D.~Lovelock and H.~Rund, {\em Tensors, Differential Forms, and Variational
  Principles}.
\newblock Dover Publications, 1989.

\bibitem{knutson2000honeycombs}
A.~Knutson and T.~Tao, ``Honeycombs and sums of hermitian matrices,'' {\em
  Notices Of The American Mathematical Society}, vol.~48, no.~2, pp.~175--186,
  2001.

\bibitem{bhatia_ma}
R.~Bhatia, {\em Matrix Analysis}.
\newblock Graduate Texts in Mathematics, Springer, New York, NY, 1~ed., 1997.

\bibitem{rv_algebra}
M.~D. Springer, {\em The Algebra of Random Variables}.
\newblock John Wiley \& Sons, Inc., 1979.

\bibitem{khalil}
H.~K. Khalil, {\em Nonlinear Systems}.
\newblock Prentice Hall, third~ed., 2002.

\end{thebibliography}

\end{document}